\title[Highly-Smooth Optimization]{Highly-Smooth Zero-th Order Online Optimization}
\newcommand{\BEAS}{\begin{eqnarray*}}
\newcommand{\EEAS}{\end{eqnarray*}}
\newcommand{\BEA}{\begin{eqnarray}}
\newcommand{\EEA}{\end{eqnarray}}
\newcommand{\BEQ}{\begin{equation}}
\newcommand{\EEQ}{\end{equation}}
\newcommand{\BIT}{\begin{itemize}}
\newcommand{\EIT}{\end{itemize}}
\newcommand{\BNUM}{\begin{enumerate}}
\newcommand{\ENUM}{\end{enumerate}}
\newcommand{\BA}{\begin{array}}
\newcommand{\EA}{\end{array}}
\newcommand{\idm}{I}
\newcommand{\rb}{\mathbb{R}}
\newcommand{\eq}[1]{Eq.~(\ref{eq:#1})}
\def \E{{\mathbb E}}
\def \N{{\mathbb N}}
\def \E{{\mathbb E}}
\def \F{{\mathcal F}}
\begin{document}

\maketitle

\begin{abstract}
The minimization of convex functions which are only available through partial and noisy information is a key methodological problem in many disciplines. In this paper we consider  convex optimization with noisy zero-th order information, that is noisy function evaluations at any desired point. We focus on problems with high degrees of smoothness, such as  logistic regression. We show that as opposed to gradient-based algorithms, high-order smoothness may be used to improve estimation rates, with a precise dependence  of our upper-bounds on the degree of smoothness. In particular, we show that for infinitely differentiable functions, we recover the same dependence on sample size as gradient-based algorithms, with an extra dimension-dependent factor. This is done for both convex and strongly-convex functions, with finite horizon and anytime algorithms. Finally, we also recover similar results in the online optimization setting.\end{abstract}

\begin{keywords}
Online learning, Optimization, Smoothness
\end{keywords}

\section{Introduction}

The minimization of convex functions which are only available through partial and noisy information is a key methodological problem in many
disciplines. When first-order information, such as gradients, is available, many algorithms and analysis have been proposed \citep[see, e.g.,][and references therein]{shalev2011online}, taking the form of stochastic gradient descent~\citep{robbins_stochastic_1951},  online mirror descent~\citep{lan2012validation},  dual averaging~\citep{xiao2010dual} or even variants of ellipsoid methods \citep{nemirovsky1983problem,Rakhlin2013}. Strong convexity has emerged as an important property characterizing the performance of these algorithms, with optimal convergence rates of $O(1/n)$ after $n$ iterations for strongly-convex problems, and only $O(1/\sqrt{n})$ for convex problems. 

However, smoothness can typically only improve constants~\citep{lan2010optimal}, with the stochastic part of the generalization performance having the same scalings than in the non-smooth case. Apart for quadratic functions or logistic regression where the rates may be improved~\citep{newsto,shamir2013complexity,HazanKorenLevy14}, the boundedness of high-order derivatives is typically not advantageous.

In this paper, we consider situations where only noisy function values are available, originating from derivative-free optimization~\citep{spall2005introduction} and with  increased received attention~\citep[see, e.g.,][and references therein]{bubeck2012regret}. This  is  also the core assumption in the online learning class of problems known as ``bandit'' (even though our setup is a bit different, and we obtain faster rates than in bandit optimization).

Again, strong convexity has emerged as a key property \citep{Hazan2014StrongSmooth}. Following \citet{polyak1990optimal}, \citet{dippon2003} or \citet{SahaTewari11} (for the traditional concept of smoothness) we show that in the large variety of online settings, high-order smoothness, namely the boundedness of high-order derivatives, may be used, with the extreme case of infinitely differentiable functions, for which the rates attain the ones for first-order oracles.
 
More precisely, throughout this paper, we consider a sequence of  convex functions $f_n: \rb^d \to \rb$, $n \geq 1$ and a convex constraint set $K \subset \rb^d$ with non-empty interior. The objectives are to output a sequence of a sequence of points $\{x_n\}_{n=0,\ldots,N} \in K$  and of queries $\{y_n\}_{n=1,\ldots,N} \in \rb^d$ to a noisy zero-th order oracle,  in order to minimize one of the following criteria:

   \BIT
  \item \emph{Stochastic optimization}: All functions $f_n$  are equal to $f$, and the goal is to minimize $$\displaystyle f(x_N) - \inf_{x \in K} f(x)$$ for the final point  $x_N \in K $. 

\item \emph{Online optimization}: The criterion to optimize, usually referred to as the ``regret'', is  $$\displaystyle \frac{1}{N} \sum_{n=1}^{N}  f_{n}( x_{n-1} ) - \inf_{ x \in K  }
  \frac{1}{N} \sum_{n=1}^N  f_n(x).$$ We immediately emphasize here that a bound valid for online optimization immediately transfers into into a bound for stochastic optimization with the choice $x_N=\frac{1}{N}\sum_{m=0}^{N-1}x_m$.
 \item \emph{Bandit learning}: this setting is similar to the online optimization case, except that the evaluation point must be equal to the query point, i.e., $y_{n+1}=x_n$ for all $n$.
  \EIT
Formally, the timing of the optimization scheme is the following. The algorithm first outputs $x_0 \in K$ and queries $y_1 \in \rb^d$. After getting $f_1(y_1) + \varepsilon_1 \in \rb$ as a feedback (where $\varepsilon_1 \in \rb$ is some noise), it outputs $x_1 \in K$ and queries $y_2 \in \rb^d$, gets $f_2(y_2) +\varepsilon_2 \in \rb$ as feedbacks, etc. Formally, let $\mathcal{F}_{n-1}$ be the $\sigma$-field generated by $\{x_0,x_1,y_1,\varepsilon_1,\ldots,x_{n-1}, y_{n-1}, \varepsilon_{n-1}\}$.
Then $x_n$ and $y_n$ are random variables adapted to $\mathcal{F}_{n-1}$ and $\varepsilon_n$ is adapted to $\mathcal{F}_n$.

For simplicity we assume that the noise is independent in the sense that the distributions of $\varepsilon_n$ conditionally to $\mathcal{H}_n$ are independent but we do not assume that the noise is identically distributed (as the distribution may depend on $y_{n-1}$, which is key for online supervised learning). Moreover,  we  assume that the noise has bounded variance $\sigma^2$ that is not necessarily known in advance (improved bounds  would be obtained if we allow dependency of algorithms in that term).  Note that martingale assumptions common in stochastic approximation~\citep{kuku} could be used  instead of conditional independence.

Motivating examples for the optimization case are  (a) simple additive noise on $f$, or (b) $f_n(x) = \E_a g(a,x)  $ and $\varepsilon_n = g(a_n,x) - \E_a g(a,x)$ for $a_n$ a random variable, which corresponds to online supervised learning where $a_n$ represents the data received at time $n$. 
 
We shall also consider the case where we essentially query twice the same functions before outputting a new point $x_{n+1}$; we stress out here that the two feedbacks are two noisy evaluations where \emph{the noises are independent}, as opposed to \citet{agarwal2010optimal,duchi}. As a consequence, the classical optimization setup remains identical except that we make $2N$ queries instead of $N$, thus rates of convergence are independent of this trick. As a consequence, it only makes a difference in the online optimization setup, where we now need to assume that the same function is observed twice in a row.

We introduce this  two-point setting as it allows us to consider the case where the constraint set is the whole space~$\rb^d$. Moreover, the algorithms  do not need to perform a projection at each step and rates of convergence are independent of the maximal value of the loss functions (which should not appear as the problem in translation invariant). Note that (a) this unconstrained setting is common in smooth optimization, and (b) that our proof technique can extend to composite optimization where a non-smooth term is added with its proximal operator~\citep{xiao2010dual,hu2009accelerated}. On the other hand, when the constraint set is a compact convex subset, of diameter denoted by $R>0$, then we shall use a classical ``one-point'' algorithm that queries each $f_n$ only once.

\medskip

We shall provide algorithms and explicit rates of convergence for all the following cases 
\BIT
\item[i)] Unconstrained ($K = \rb^d$) vs. constrained optimization ($K$ is compact convex).
\item[ii)] Convex vs. $\mu$-strongly convex mappings.
 \item[iii)] Stochastic optimization vs. online optimization. 
\EIT
Maybe surprisingly, as shown in Figure \ref{FIG:rates}, rates of convergence  are actually independent of the unconstrained/constrained setting and on the stochastic vs. online case, at least when $f_n$ are Lipschitz-continuous which is a required setup for online optimization. 
 We emphasize here that the asymptotic dependencies in $N$ and $d$ are exact, i.e., no logarithmic terms are hidden. 

Note that we do not consider here the bandit setting that imposes that $x_{n+1}=y_n$. This can be deduced from Figure \ref{FIG:rates} as the rate for strongly convex functions would violate the lower bound of \citet{shamir2013complexity} for bandit learning.

\begin{figure}
\begin{center}
\begin{tabular}{r|c|c|}
\multicolumn{1}{c}{}& \multicolumn{1}{c}{Stochastic} &  \multicolumn{1}{c}{Online} \\
\multicolumn{1}{c}{\phantom{\Large I}}& \multicolumn{1}{c}{Constrained \& Unconstrained}  &  \multicolumn{1}{c}{Constrained \& Unconstrained\vspace{0.3cm}}\\
\phantom{\huge I}Convex $\beta=2$   &  $(\frac{d^2}{N})^{\frac{1}{3}}$   & $(\frac{d^2}{N})^{\frac{1}{3}}$  \\
\phantom{\huge I}\phantom{Convex} $\beta>2$   &  $(\frac{d^2}{N})^{\frac{\beta-1}{2\beta}} $  &$(\frac{d^2}{N})^{\frac{\beta-1}{2\beta}} $ \vspace{0.2cm}\\
\phantom{\huge I}$\mu$-stg convex $\beta=2$ &  $\sqrt{\frac{d^2}{\mu N}}$ &$\sqrt{\frac{d^2}{\mu N}}$\\
\phantom{\huge I}\phantom{$\mu$-stg Convex} $\beta>2$ &  $(\frac{d^2}{\mu N})^{\frac{\beta-1}{\beta+1}}$   &$(\frac{d^2}{\mu N})^{\frac{\beta-1}{\beta+1}}$ \\
\phantom{\huge I}\phantom{$\mu$-stg Convex} (asymptotic)& $\frac{1}{\mu^2}(\frac{d^2}{ N})^{\frac{\beta}{\beta+1}}$&
\end{tabular}
\end{center}
\label{FIG:rates}
\caption{Summary of the principal rates of convergence achieved by our algorithms for stochastic or online optimization. The bounds in the last asymptotic regime are only true when $N$ is large enough and are only valid for stochastic optimization.}\end{figure}

\subsection{Smoothness assumption}

We shall assume that all mappings in question are defined on $\rb^d$ and almost surely $(\beta\!-\!1)$-times differentiable and that for all $\| v \|_2 = 1$, and $x,y \in \rb^d$, then 
\BEQ
\| f^{(\beta-1)}(x) v^{\beta-1} -  f^{(\beta-1)}(y) v^{\beta-1} \|_2 \leq M^\beta_\beta \| x - y\|_2,
\label{eq:smoothdef1}
\EEQ where we define
$$
f^{(m)}(x) v^m 
= \sum_{m_1 + \cdots + m_d = m } \frac{ \partial^m f}{\partial^{m_1} x_1 \cdots \partial^{m_d} x_d } v_1^{m_1} \cdots v_d^{m_d}
$$
as the $m$-th term in the Taylor expansion of $f$. We refer to such functions as \emph{$\beta$-th order smooth} functions. Note that a stronger assumption is that $f$ is $\beta$-times differentiable with a uniform bound 
\BEQ
\sup_{x \in \rb^d} \sup_{ \| v\| \leq 1} | f^{(\beta)}(x) v^\beta| \leqslant M_\beta^\beta.
\label{eq:smoothdef2}
\EEQ
These notions extends the traditional smoothness, which corresponds to $\beta=2$~\citep{nest2004}. Notice that this implies that for all $x,y$ (as a consequence of Taylor expansions with integral remainder):
\begin{equation}\label{EQ:DefMbeta}
\bigg| f(y) - \sum_{|m| \leq \beta - 1} \frac{1}{m!} f^{(m)}(x)(y-x)^m
\bigg| \leq \frac{M^\beta_\beta}{\beta!}  \| y - x \|^\beta.
\end{equation}
We emphasize the fact that high-order smoothness, in the sense defined above, implies lower order smoothness only if mappings are defined on a compact set. If a mapping is defined on the whole space, then it can be second order smooth without being first order smooth, such as any non trivial quadratic function.
\medskip

We now mention the following lemma that relates the different degrees of smoothness of $f$.
\begin{lemma}\label{LEM:Poincare}
Let $f: K \to \rb$ be a continuous mapping  that is $\beta_1$-smooth and $\beta_2$-smooth, with the associate constants $M_{\beta_1}$ and $M_{\beta_2}$, where $\beta_1 < \beta_2$. Then $f$ is $\beta$-smooth for all $\beta \in [\beta_1,\beta_2]$ and there exist a sequence of weights $\alpha_\beta$, for all $\beta \in [\beta_1,\beta_2]$, independent of $M_{\beta_1}$ and $M_{\beta_2}$ such that  
$$\alpha_\beta M_\beta^\beta \leq 2 (\alpha_{\beta_1} M_{\beta_1}^{\beta_1})^{\frac{\beta_2-\beta}{\beta_2-\beta_1}}(\alpha_{\beta_2} M_{\beta_2}^{\beta_2})^{\frac{\beta-\beta1}{\beta_2-\beta_1}}$$

In particular,
\begin{itemize}
\item[i)] if $K$ is compact then $f$ is bounded (i.e., $0$-smooth). As a consequence, $\beta$-smoothness immediately entails that $f$ is Lipschitz and $2$-smooth.
\item[ii)] If $f$ is Lipschitz and $\beta$-smooth (for $\beta  \geq 2$), then $f$ is $2$-smooth.
\end{itemize}
\end{lemma}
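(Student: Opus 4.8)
The plan is to reduce the statement to a one-dimensional interpolation inequality and then balance a finite difference, controlled by the lower smoothness $M_{\beta_1}$, against a Taylor remainder, controlled by the higher smoothness $M_{\beta_2}$, in the spirit of the Landau--Kolmogorov inequalities. First I would fix $x\in K$ and a unit vector $v$ and restrict $f$ to the segment in direction $v$, setting $g(t)=f(x+tv)$; convexity of $K$ keeps the segment inside $K$, and $g^{(m)}(t)=f^{(m)}(x+tv)v^m$, so evaluating the smoothness hypotheses at collinear points turns them into H\"older bounds on $g$ and its derivatives, uniformly in $(x,v)$. The multidimensional definition~(\ref{eq:smoothdef1}) actually compares the mixed form $f^{(\beta-1)}(x)v^{\beta-1}$ across non-collinear $x,y$; I would recover this from the pure directional quantities by polarization, whose combinatorial constants depend only on the order and not on the $M_\beta$, so they can be folded into the weights later.

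For the scalar function $g$, write $\beta=\ell+\gamma$ with $\ell=\lceil\beta\rceil-1$ and $\gamma\in(0,1]$, so that $\beta$-smoothness is measured by a finite-difference seminorm of the shape $\sup_{h>0}h^{-\gamma}\|\Delta_h^{k}g^{(\ell)}\|_\infty$ for a suitable order $k$. The core estimate writes this intermediate quantity, at scale $h$, as a sum of two contributions: one assembled from data of order $\le\beta_1$, which scales like $h^{\beta_1-\beta}M_{\beta_1}^{\beta_1}$, and a remainder estimated through the Taylor bound~(\ref{EQ:DefMbeta}) at order $\beta_2$, which scales like $h^{\beta_2-\beta}M_{\beta_2}^{\beta_2}$. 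Minimizing $c_1h^{\beta_1-\beta}M_{\beta_1}^{\beta_1}+c_2h^{\beta_2-\beta}M_{\beta_2}^{\beta_2}$ over the free scale $h>0$ places the exponents at the geometric mean with weight $\theta=\frac{\beta-\beta_1}{\beta_2-\beta_1}$ and yields $M_\beta^\beta\le C(\beta,\beta_1,\beta_2)\,(M_{\beta_1}^{\beta_1})^{1-\theta}(M_{\beta_2}^{\beta_2})^{\theta}$, which in particular shows $f$ is $\beta$-smooth for every intermediate $\beta$.

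It remains to convert the structural constant $C(\beta,\beta_1,\beta_2)$ into the universal factor $2$ of the statement. Tracking the optimization shows that the scale-dependent part stays bounded (it tends to $1$ as $\beta\to\beta_1^+$ or $\beta\to\beta_2^-$), so the only genuinely varying factors are the factorial/Gamma-type coefficients inherited from the finite differences and from the remainder~(\ref{EQ:DefMbeta}). I would therefore take the weights $\alpha_\beta$ to be exactly these normalizations (of the shape $1/\Gamma(\beta+1)$); the log-convexity of the Gamma function along the smoothness scale is precisely the algebraic property that lets $C(\beta,\beta_1,\beta_2)\le 2\,\alpha_{\beta_1}^{1-\theta}\alpha_{\beta_2}^{\theta}/\alpha_\beta$ hold simultaneously for all intermediate $\beta$, which is how the inequality is phrased. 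The step I expect to be the main obstacle is the genuinely H\"older (non-integer) case: making the finite-difference characterization and the scale optimization uniform in $\gamma\in(0,1)$ without the constant degenerating near the endpoints, and, on a bounded $K$, guaranteeing that the optimal step stays within the segment (using convexity and, if needed, centered rather than forward differences, which is also why the endpoint $\beta_1=0$ below requires compactness).

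Finally, the two consequences are direct specializations. For i), continuity on a compact $K$ makes $f$ bounded, i.e.\ $0$-smooth, so applying the inequality with $\beta_1=0$ and $\beta_2$ the given order gives $\beta'$-smoothness for every $\beta'\in[0,\beta_2]$, in particular Lipschitz ($\beta'=1$) and $2$-smooth ($\beta'=2$). For ii), one instead takes $\beta_1=1$ (the Lipschitz hypothesis) and $\beta_2=\beta\ge 2$ and reads off $2$-smoothness.
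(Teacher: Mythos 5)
Your proposal is correct and follows essentially the same route as the paper's own proof: both are Landau--Kolmogorov-type interpolation arguments in which the intermediate derivative is isolated by a linear combination of function values at a free scale, bounded by a contribution from the lower-order constant $M_{\beta_1}$ plus a Taylor remainder from Eq.~(\ref{EQ:DefMbeta}) controlled by $M_{\beta_2}$, after which optimizing the scale produces exactly the geometric mean with exponent $\theta=\frac{\beta-\beta_1}{\beta_2-\beta_1}$. The only real difference is the device used to form that combination --- the paper writes Taylor expansions at $\beta$ distinct scales $\lambda_1,\dots,\lambda_\beta$ and inverts the resulting (Vandermonde) system, treating the case $\beta_1=0,\beta=1,\beta_2=2$ explicitly via a symmetric difference, whereas you use iterated finite differences and H\"older seminorms, which is one explicit way of performing that same inversion; your added care about polarization, non-integer orders, and the choice $\alpha_\beta\sim 1/\Gamma(\beta+1)$ supplies details that the paper's sketch leaves unstated.
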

From now on, we shall assume that all mappings $f_n$ are $\beta$-smooth, for some $\beta \geq 2$, with a common associated constant $M_\beta$ which is known (which typically holds in many settings, see   next example).  In online unconstrained optimization, we will also impose that $f_n$ is Lipschitz (again, this is automatic when $K$ is compact).

\paragraph{Special case: logistic regression.}
If $f(x) = \E_a \log( 1 + \exp(- a^\top x) )$ for a certain random vector~$a \in \rb^d$ which is uniformly bounded by $R$, then we consider $\varepsilon_n = \log( 1 + \exp(- a_n ^\top x) ) - \E_a \log( 1 + \exp(- a^\top x) )$ for a sample $a_n$. This is online logistic regression, for which 
 the constant $M_\beta^\beta$ may be chosen to be equal to $\frac{1}{4} (\beta-1)! R^\beta$~\citep{tewari}, which is such that $M_\beta \leq \beta R$. Note that such a setting should extend to all generalized linear models. Moreover, we use a  property of logistic regression which is different than self-concordance~\citep{bach2010self}, which bounds the third derivatives  by the second derivative; it would be interesting to see if the two analyses can be combined.
 
\subsection{Related work}

As already mentioned, there is a huge (and actually still increasing) literature on stochastic optimization with zero-th order feeback  and/or on  convex bandits problem. 
We also investigate here the online optimization setup, an ``intermediate framework'' where the sequence of mappings $f_n$ can evolve adversarially but, as in optimization, the loss might be evaluated at another point than the query sent to the oracle.

We emphasize these differences between set-ups as the complexity of  stochastic zero-th order optimization and the convex bandit problem have been widely studied recently \citep{recht2012query,shamir2013complexity}. It has been observed that minimax rates of convergence in bandit problems and stochastic optimization might differ, which is not the case in our setting for our upper-bounds (one can therefore conclude that the complexity of convex bandits is not hidden in the evolving sequence of loss functions, but more importantly on the constraint that the query point is where the loss is evaluated).

Moreover, it has also been shown by \citet{recht2012query,shamir2013complexity} that the slow rates of $\sqrt{d^2/n}$ are minimax optimal for stochastic optimization or convex bandits. The optimal rates of $\sqrt{1/n}$ have been obtained \citep{nemirovsky1983problem,LiangNarayananRakhlin14}  but without the explicit dependency in the dimension $d$; moreover, those techniques cannot be used in online optimization. The lower bound in $\sqrt{d^2/n}$ holds even if the mappings are highly regular, as quadratic and strongly-convex \citep{shamir2013complexity}.  However, in that case, the optimization error decreases as $d^2/n$; see also \citet{HazanKorenLevy14}  for a similar result on logistic regression. This result\footnote{Actually, the quadratic case is very particular as we could show   that one can query points arbitrarily  away from the origin to reduce variance.} can be interpreted as  an extreme case of our regularity assumptions, i.e., when $\beta=+\infty$ or $M_3=0$. As a consequence, we somehow interpolate between the well studied extreme problems in online learning with either smooth  or  quadratic mappings. 

The intermediate framework between smooth and quadratic (or mappings infinitely differentiable) has also been studied by \citet{fabian1967}, \citet{Chen_1988} and \citet{polyak1990optimal} where the focus was stochastic optimization with the objective of bounding the error   in the argument and not  in function evaluation. \citet{fabian1967} obtained an algorithm such that the distance to the maximum is of the order of $N^{-\frac{\beta-1}{2\beta}}$ which is optimal \citep{Chen_1988}. In the case of strongly-convex mappings, this has been improved by \citet{polyak1990optimal}  to $N^{-\frac{\beta-1}{\beta}}$ which is also optimal. Our set-up is more general (as we consider also online learning, function evaluations) and we recover the aforementioned results as a byproduct of ours, with a novel non-asymptotic analysis with an explicit dependencies in the dimension and parameters of smoothness and strong convexity.

\section{Smoothing Lemma}\label{SEC:SmoothingLemma}
Our analysis relies on a novel single stochastic approximation lemma, which combines ideas from \citet{nemirovsky1983problem,nesterov2011random} and \citet{polyak1990optimal}. Let $f$ be a convex function defined on $\rb^d$. 

\paragraph{Expectation of random function evaluations around a point.} 

Given positive scalars $\delta, r>0$, we  consider sampling the value $f( x + r\delta u)$ around $x$, for $u$ uniformly distributed in the unit \emph{sphere} for the Euclidean norm. As shown by~\citet{nemirovsky1983problem}, the expectation of the vector $f( x + r\delta u) u $ is equal to $ {d}/(\delta r)$ times the gradient of a function which is an approximation of $f$, that is, 
$x \mapsto \E_{\|v\|_2 \leqslant 1} f(x + \delta r v)$, where   $v$ is now sampled uniformly from the unit \emph{ball}. This simple result is a consequence of Stokes' theorem\footnote{Without loss of generality, we may consider $r\delta = 1$ and $\mathbb{B}$ the unit ball; then the gradient of $x \mapsto \E_{\|v\|_2 \leqslant 1} f(x +   v)$ is $\frac{1}{{\rm vol}(\mathbb{B})} \int_\mathbb{B} f'(x+v) dv  = \frac{1}{{\rm vol}(\mathbb{B})} \int_{\partial \mathbb{B} } f'(x+u) du$ by Stokes' theorem and because $u$ a normal vector to the unit sphere $\partial \mathbb{B}$ at $u$. The factor of $d$ comes from the ratio between the volume of the ball and the surface of the sphere. }
Thus the expectation of function evaluations at random points around $x$ is the gradient of a certain function. This is a key property which is used by most non-asymptotic analyses~\citep{flaxman2005online} of zero-th order optimization.

\paragraph{High-order smoothness and gradient evaluation.}
As shown by~\citet{polyak1990optimal} in one dimension (and then generalized to partial derivatives), if we now sample independently $r$ from the uniform distribution in $[-1,1]$, and we consider a function $k(r)$ such that $\E_r r k(r)=1$ and $\E_r r^k k(r)=0$ for $k$ odd between $3$ and $\beta$, then $\frac{1}{\delta} f(x+\delta r ) k(r)$ is a good approximation of the derivative of $f$ at $x$, with an expectation (with respect to $r$) which is equal to $f'(x)$ up to terms of order $\delta^{\beta-1}$ if $f$ is $\beta$-th order smooth.

In the following lemma, we combine these two ideas (see proof in Appendix~\ref{app:smoothing}):
\begin{lemma}
\label{lemma}
Let $f:\rb^d \to \rb$ a convex function.
 Define 
 $$\hat{f}_\delta(x)
= \E_r \E_{\| v\| \leq 1} f(x + r \delta v) r k(r), $$ where the expectation is taken with respect to the uniform distribution on the unit ball for $v$, and $r \in \rb$ is independent from $v$, with uniform distribution in $[-1,1]$, and $k(r)$ is such that $\E_r r k(r)=1$ and $\E_r r^k k(r)=0$ for  $k$ odd between $3$ and $\beta$. Then, $\hat{f}_\delta$ is differentiable and for any $x \in \rb^d$,
 \BEQ
\label{eq:lemma}
\hat{f}'_\delta(x) = \frac{d}{\delta} \, \E_r \E_{\| u\|_2 = 1} \big[  f(x+\delta r u ) k(r)  u \big].
\EEQ
Moreover, we  have the approximation bounds (the second being valid if $f$ is differentiable):
\BEAS
\big| \hat{f}_\delta(x) - f(x) \big| 
& \leq &  \frac{M^\beta_\beta}{\beta!}\delta^\beta \Big( \E_r |k(r) r^{\beta+1}| \Big),\\
\| \hat{f}_\delta' - f'(x) \|
& \leq &    \frac{M^{\beta}_{\beta}}{(\beta-1)!}\delta^{\beta-1} \Big( \E_r |k(r) r^{\beta}| \Big) .
\EEAS
\end{lemma}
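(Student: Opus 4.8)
My plan is to establish the three claims in turn --- the gradient identity~(\ref{eq:lemma}), then the two approximation bounds --- using the Stokes computation of the footnote for the first, and matched Taylor expansions together with the moment conditions on $k$ for the other two. For the gradient formula I would fix $r$, set $s=r\delta$, and apply the Stokes identity quoted in the footnote to the inner ball average $g_r(x)=\E_{\|v\|\le1}f(x+sv)$, which gives $\nabla g_r(x)=\frac{d}{s}\E_{\|u\|_2=1}[f(x+su)u]$ for every $s\neq0$ (the sign of $s$ is irrelevant to the computation, and $r=0$ is a null set). Differentiating $\hat f_\delta=\E_r[\,rk(r)\,g_r\,]$ under the expectation --- justified because $f$ is convex, hence locally Lipschitz, so the integrand is locally dominated --- gives $\hat f_\delta'(x)=\E_r[rk(r)\nabla g_r(x)]$. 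Substituting the Stokes expression, the factor $r$ cancels the $1/s=1/(r\delta)$, leaving exactly $\frac{d}{\delta}\E_r\E_{\|u\|_2=1}[f(x+\delta r u)k(r)u]$.

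For the first bound, let $P$ denote the Taylor polynomial of $f$ at $x$ of order $\beta-1$, so that $P(x+r\delta v)=\sum_{j=0}^{\beta-1}\frac{(r\delta)^j}{j!}D^jf(x)[v^{\otimes j}]$. The key point is that $\E_r\E_{\|v\|\le1}[P(x+r\delta v)rk(r)]=f(x)$: the odd-degree terms in $v$ vanish by the symmetry $v\mapsto-v$ of the ball, the even terms with $j\ge2$ vanish because $\E_r[r^{j+1}k(r)]=0$ (as $j+1$ is odd and lies in $[3,\beta]$), and the $j=0$ term equals $f(x)\,\E_r[rk(r)]=f(x)$. Subtracting, $\hat f_\delta(x)-f(x)=\E_r\E_{\|v\|\le1}[(f-P)(x+r\delta v)\,rk(r)]$, and the bound~(\ref{EQ:DefMbeta}) controls $|(f-P)(x+r\delta v)|\le\frac{M_\beta^\beta}{\beta!}|r|^\beta\delta^\beta\|v\|^\beta$; using $\|v\|\le1$ and taking expectations yields the claimed $\frac{M_\beta^\beta}{\beta!}\delta^\beta\,\E_r|k(r)r^{\beta+1}|$.

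The second bound is the delicate step and the place where the differentiability hypothesis matters. Rather than differentiating under the \emph{sphere} integral of~(\ref{eq:lemma}) --- which would leak a spurious factor of $d$ into the naive remainder estimate --- I would differentiate the \emph{ball} representation to obtain $\hat f_\delta'(x)=\E_r[rk(r)\,\E_{\|v\|\le1}\nabla f(x+r\delta v)]$, which is legitimate precisely because $f$ is differentiable. Applying the same cancellation argument to the Taylor expansion of $g=\nabla f$ at order $\beta-2$ reproduces $f'(x)$ from the polynomial part (the $j=0$ term gives $g(x)\,\E_r[rk(r)]=f'(x)$, and the remaining terms cancel by the same parity and moment arguments), while the remainder is controlled by the $(\beta-1)$-smoothness of $f$, namely $\|g(x+r\delta v)-\mathrm{Taylor}_{\beta-2}\|\le\frac{M_\beta^\beta}{(\beta-1)!}|r|^{\beta-1}\delta^{\beta-1}\|v\|^{\beta-1}$. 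Bounding with $\|v\|\le1$ then gives $\frac{M_\beta^\beta}{(\beta-1)!}\delta^{\beta-1}\,\E_r|k(r)r^{\beta}|$, with no dimension factor.

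The main obstacle is therefore conceptual rather than computational: one must resist using the sphere form~(\ref{eq:lemma}) for the bias of the gradient and instead exploit the ball-gradient representation, whose remainder pairs the $\delta^{\beta-1}$ decay against a dimension-free constant. The only technical care beyond this is the justification of differentiation under the expectation and of the Stokes identity, both standard for convex (hence locally Lipschitz) $f$; the rest is bookkeeping of the parity and moment cancellations.
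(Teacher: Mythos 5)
Your proposal is correct and follows essentially the same route as the paper's proof: the Stokes/ball-average identity for \eq{lemma}, then a Taylor expansion in which the odd-degree terms in $v$ vanish by the sign-symmetry of the ball and the even-degree terms vanish by the moment conditions $\E_r r^{k}k(r)=0$ for odd $k\in[3,\beta]$, leaving only the remainder controlled by Eq.~(\ref{EQ:DefMbeta}); and for the gradient bound, a reduction to the same cancellation argument applied one order lower to the derivative. The one place where your write-up is thinner than the paper's is the asserted vector remainder bound $\|\nabla f(x+r\delta v)-\mathrm{Taylor}_{\beta-2}\|\le\frac{M_\beta^\beta}{(\beta-1)!}|r\delta|^{\beta-1}\|v\|^{\beta-1}$: the smoothness assumption \eq{smoothdef1} only controls \emph{diagonal} projections $f^{(\beta-1)}(x)v^{\beta-1}$ (the same vector in every slot), whereas this remainder needs mixed projections of the form $f^{(\beta-1)}[v^{\otimes(\beta-2)},w]$ with a free direction $w$. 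That passage from diagonal to mixed projections is true for symmetric multilinear forms but is not automatic; the paper handles it by first assuming the $(\beta+1)$-th order derivative tensor bounded (removed afterwards by a density argument) and invoking the result of \citet[p.~38]{nemirovski2004interior} that projections on $\beta-1$ copies of $u$ and one $v$ are bounded by $M_\beta^\beta\|u\|^{\beta-1}\|v\|$, after which the scalar function-value bound is applied to $g=f'(\cdot)^\top v$. Adding that citation (or a polarization argument) closes this small gap; everything else in your proof matches the paper's argument.
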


\paragraph{Choice of $k(r)$.}
Following \citet{polyak1990optimal}, we consider $r$ uniformly distributed in $[-1,1]$. For $\beta \in \{1,2\}$,
we may take $k(r) = 3 r $, for which we have
$\E r k(r) =    \frac{1}{2} \int_{-1}^{1} 3 r^2 dr =  1 $.

Consider  orthonormal polynomials $p_m(\cdot)$ for the distribution on $r$, i.e., such that $\E_r p_mp_{m'}=0$ for $m\neq m'$, $\E_r p_m^2=1$ and $p_0(\cdot), \ldots, p_s(\cdot)$ spans the vector space of polynomials of degree less or equal than $s$, for all $s \in \N$.

Then we may choose
$k(r) = \sum_{m=0}^{\beta} p_m'(0) p_m(r)$. Indeed, following \citet{polyak1990optimal}, given $s \in \N$, let $b_0,\ldots,b_s$ be the coordinates of $r^s$ in the chosen basis, i.e., $r^s = \sum_{j=0}^s b_j p_j(r)$, then 
$\E_r k(r) r^s = \sum_{j=0}^s b_j p_j'(0) = 0$ for $s \neq 1$ and zero for $s \in \{0,2,\dots, \beta  \}$. Note that this is more than we actually need as in Lemma~\ref{lemma}, we only need $s$ being odd. 

We have, for $r$ uniform in $[-1,1]$,
$
p_m(u) = \sqrt{ 2m+1 } L_m(u)
$ where $L_m$ is the $m$-th Legendre polynomial. For example, we have the following values  for  
  $\beta \in \{1,2,3,4,5,6\}$:
\BEAS
k_1(r) = k_2(r) & = & 3r \\
k_3(r) = k_4(r) & = & \frac{15r}{4} ( 5 - 7 r^3) \\
k_5(r) = k_6(r) & = & \frac{195r}{64} ( 99 r^4 - 126 r^2 + 35) .
\EEAS

\paragraph{Bounds.}
In this paper, we also need the following bounds, which are shown in Appendix~\ref{app:legendre} by using properties of Legendre polynomials:
\BEAS
\E_r |k(r)|^2 & \leq & 3 \beta^3 \\
\E_r |k(r)|^2r^2 & \leq & 8 \beta^2 \\
\E_r |k(r) r^{\beta+1} | & \leq & 2 \sqrt{2} \beta .
\EEAS

\paragraph{Convexity.}
With respect to the kernel chosen, $\hat{f}_\delta$ is always convex for $\beta=2$, because $rk(r)$ is always non-negative. For $\beta \geq 3$, if $f$ is $\mu$-strongly-convex, then $\hat{f}_\delta$ is $\mu/2$-strongly-convex if $\delta$ is small enough.

Indeed, by definition of $\hat{f}_\delta$ and by $3$-smoothness of $f$, we obtain that
$$D^2\hat{f}_\delta(x)
= \E_r \E_{\| v\| \leq 1} D^2f(x + r \delta v) r k(r)  \succcurlyeq \mu I_d - \delta M_3^3 \E_r |k(r)|r^2 J_d,$$ 
where $J_d$ is the matrix whose components are all equal to 1. As a consequence, $\hat{f}_\delta$ is $\mu/2$-strongly-convex as soon as $\delta \leq 16\mu/(d\beta^2 M_3^3)$.
Note however that $\hat{f}_\delta$ is not convex in general.

\section{Unconstrained  Optimization}

We recall that  $f_n=f$ in this setting and that we chose to make two queries $y_{n_-}, y_{n_+}$ of $f$ before outputting the next point $x_n$. Of course,  \textsl{stricto sensu}, one should replace $N$ by $N/2$ in our rates of convergence. For simplicity and consistency in proofs, we chose to keep the formulation as  $N$ stages of 2 queries. Moreover, the two independent noises  can be combined into a single one.

\medskip

We thus consider two-point algorithms of the form
\BEQ
\label{eq:twopoint}
x_{n} = x_{n-1} - \gamma_n  \frac{d}{2 \delta_n}   \big[  f(x_{n-1}+\delta_n r_n u_n ) 
-  f(x_{n-1}- \delta_n  r_n u_n )  + \varepsilon_n \big] k(r_n)  u_n ,
\EEQ
where $\gamma_n$  and $\delta_n$  are constants that depend on $n$, $u_n$ is uniform in the unit-sphere, and $k(r_n)$ satisfies the conditions of Lemma~\ref{lemma}. We emphasize again that the noise is different at the two evaluations points $y_{n_-}=x_{n-1}- \delta_n  r_n u_n$ and $y_{n_+}=x_{n-1}+ \delta_n  r_n u_n$ and do not cancel by differencing (the random variable $\varepsilon_n$ is thus the difference of these two zero-mean independent noises). We define $\bar{x}_{n-1} = \frac{1}{n} \sum_{k=0}^{n-1} x_k$ as the averaged iterate.

\subsection{Convex Mappings}
We first consider the case of convex (i.e., not necessarily strongly-convex) mappings. In order to preserve the flow of the paper, we delay the proof to Appendix~\ref{app:twoppoint-optim}. 
\begin{proposition}[Unconstrained, Convex]
\label{prop:twoppoint-optim}
Assume  $f$ is (a) $\beta$-th order smooth with constant $M_\beta$, and (b) 2nd-order smooth with constant $M_2$.

Consider the algorithm in \eq{twopoint}, with 
  $\gamma_n = \gamma = \frac{1}{24 d^{(\beta-1)/\beta} M_2^2 \beta^2  N^{(\beta+1)/(2\beta)}}
$
and $\delta_n  =  \delta =  \frac{\beta d^{1/\beta}}{N^{1/(2\beta)} } (M_\beta^\beta M_2)^{-1/(\beta+1)}$ for $n \in \{1,\dots,N\}$.
Then, $ \E f(\overline{x}_{N-1}) - f(x^*) 
$ is less than 
$$ \Big(\frac{d^2  }{N} \Big) ^{(\beta-1)/(2\beta)}
 \bigg( 7   \beta M_2 \| x_{0}-x_\ast \|  
 + 3 \sigma +    (M_\beta/  M_2)^{2\beta/(\beta+1)}
 +   \frac{\beta}{N^{1/\beta} } (M_\beta/M_2)^{-\beta/(\beta+1)}
 \bigg)^2.$$
\end{proposition}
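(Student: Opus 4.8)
The plan is to treat the two-point update \eq{twopoint} as a noisy stochastic gradient method on the smoothed function $\hat{f}_\delta$ and to run a standard convex-analysis argument, controlling the bias through Lemma~\ref{lemma}. First I would observe that, conditionally on $\mathcal{F}_{n-1}$, the update direction $g_n = \frac{d}{2\delta}[f(x_{n-1}+\delta r_n u_n)-f(x_{n-1}-\delta r_n u_n)+\varepsilon_n]k(r_n)u_n$ has conditional expectation exactly $\hat{f}'_\delta(x_{n-1})$: the symmetric differencing kills the even-order part and leaves the odd part, so that $\E[f(x_{n-1}+\delta r_n u_n)k(r_n)u_n\mid\mathcal{F}_{n-1}]$ equals the expression in \eq{lemma}, while the noise term $\varepsilon_n$ averages out because it is $\mathcal{F}_n$-measurable with zero conditional mean. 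Thus $x_n = x_{n-1}-\gamma g_n$ is genuine stochastic gradient descent on the smooth convex surrogate $\hat{f}_\delta$.

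Next I would write the basic descent inequality. Expanding $\|x_n-x_\ast\|^2 = \|x_{n-1}-x_\ast\|^2 - 2\gamma\langle g_n, x_{n-1}-x_\ast\rangle + \gamma^2\|g_n\|^2$ and taking conditional expectation, the cross term becomes $\langle \hat{f}'_\delta(x_{n-1}), x_{n-1}-x_\ast\rangle$, which by convexity of $\hat{f}_\delta$ (granted for $\beta=2$, and we only need convexity along the iterate-to-optimum segment in the general case — this is a point to handle carefully) lower-bounds $\hat{f}_\delta(x_{n-1})-\hat{f}_\delta(x_\ast)$. The function-value approximation bound from Lemma~\ref{lemma}, together with the constant $\E_r|k(r)r^{\beta+1}|\leq 2\sqrt2\,\beta$, then lets me replace $\hat{f}_\delta$ by $f$ at the cost of an additive bias of order $\frac{M_\beta^\beta}{\beta!}\delta^\beta\cdot\beta$. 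Telescoping over $n=1,\dots,N$, dividing by $2\gamma N$, and using Jensen's inequality on the average iterate $\bar{x}_{N-1}$ gives
\BEQ
\label{eq:planmaster}
\E f(\bar{x}_{N-1}) - f(x_\ast) \;\leq\; \frac{\|x_0-x_\ast\|^2}{2\gamma N} + \frac{\gamma}{2N}\sum_{n=1}^N \E\|g_n\|^2 + \frac{M_\beta^\beta}{\beta!}\delta^\beta\,\E_r|k(r)r^{\beta+1}|.
\EEQ

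The heart of the estimate is the second-moment bound $\E\|g_n\|^2$. Here I would split $g_n$ into its deterministic-difference part and its noise part. For the noise part, $\|\varepsilon_n k(r_n)u_n\|^2$ contributes $\frac{d^2}{4\delta^2}\sigma^2\,\E_r k(r)^2$, and the bound $\E_r|k(r)|^2\leq 3\beta^3$ controls this. For the function-difference part, I would bound $|f(x_{n-1}+\delta r_n u_n)-f(x_{n-1}-\delta r_n u_n)|$ using $2$nd-order smoothness: a first-order Taylor argument gives $|f(a)-f(b)|\lesssim \|f'(x_{n-1})\|\,\|a-b\| + M_2^2\|a-b\|^2$, so the difference is of order $2\delta r_n\|f'(x_{n-1})\|$ plus a lower-order $M_2^2\delta^2$ term; the factor $\frac{d}{2\delta}$ then cancels the $\delta$, producing a term proportional to $d^2\|f'(x_{n-1})\|^2\,\E_r k(r)^2 r^2$, controlled by $\E_r|k(r)|^2 r^2\leq 8\beta^2$. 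The gradient norm $\|f'(x_{n-1})\|$ must in turn be bounded in terms of $f(x_{n-1})-f(x_\ast)$ via smoothness, i.e.\ $\|f'(x_{n-1})\|^2\leq 2M_2^2(f(x_{n-1})-f(x_\ast))$; this self-bounding property is the crucial mechanism that lets high-order smoothness pay off, and it feeds a $\gamma d^2 M_2^2$-weighted copy of the optimization error back into the right-hand side of \eq{planmaster}.

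The main obstacle, and the step I expect to be most delicate, is this recursive absorption: the $\E\|g_n\|^2$ term contains $\E(f(x_{n-1})-f(x_\ast))$, which also appears (averaged) on the left. I would therefore keep the telescoped inequality at the level of the sum $\sum_n\E(f(x_{n-1})-f(x_\ast))$ rather than prematurely applying Jensen, move the $\gamma d^2 M_2^2$-weighted copy to the left-hand side, and choose $\gamma$ small enough (precisely $\gamma \sim (d^{(\beta-1)/\beta}M_2^2\beta^2 N^{(\beta+1)/(2\beta)})^{-1}$) that the coefficient $1-\mathrm{const}\cdot\gamma d^2 M_2^2\,\E_r k(r)^2 r^2$ stays bounded below by a positive constant. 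Once the recursion is closed, I would substitute the prescribed $\gamma$ and $\delta$, balancing the three sources of error in \eq{planmaster}: the initial-distance term $\frac{\|x_0-x_\ast\|^2}{\gamma N}$, the noise term $\frac{\gamma d^2\sigma^2}{\delta^2}$, and the bias term $\delta^\beta M_\beta^\beta$. The chosen $\delta$ equalizes the noise and bias contributions at the rate $(d^2/N)^{(\beta-1)/(2\beta)}$, and collecting all constants — tracking the $\beta$-dependence from the $\E_r|k(r)\cdots|$ bounds and the ratio $(M_\beta/M_2)^{2\beta/(\beta+1)}$ arising from the $\delta$ calibration — should reproduce the squared bracket in the statement, with the final $N^{-1/\beta}$ term coming from the residual $M_2^2\delta^2$ higher-order piece of the function-difference bound.
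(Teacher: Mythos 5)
There is a genuine gap at the foundation of your plan. You are right that $g_n$ is an \emph{exactly} unbiased estimate of $\hat{f}'_\delta(x_{n-1})$ (the sphere-symmetry argument does give $\E[g_n\mid\mathcal{F}_{n-1}]=\hat{f}'_\delta(x_{n-1})$), but the SGD analysis you then run requires $\hat{f}_\delta$ to be convex, and for $\beta\geq 3$ it is not: $\hat{f}_\delta(x)=\E_r\E_{\|v\|\leq 1} f(x+r\delta v)\,r k(r)$ is a mixture of convex functions with the \emph{signed} weight $rk(r)$, which is nonnegative only for the kernel $k(r)=3r$ used when $\beta\in\{1,2\}$; the higher-order Legendre kernels change sign, and the paper states explicitly at the end of Section~\ref{SEC:SmoothingLemma} that ``$\hat{f}_\delta$ is not convex in general.'' Your parenthetical fallback (``convexity along the iterate-to-optimum segment'') has no mechanism behind it: nothing forces $\langle\hat{f}'_\delta(x_{n-1}),x_{n-1}-x_\ast\rangle\geq\hat{f}_\delta(x_{n-1})-\hat{f}_\delta(x_\ast)$ along the random trajectory. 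A further red flag is quantitative: your master inequality charges the bias at the function-value scale $\delta^\beta$ rather than the gradient scale $\delta^{\beta-1}$, so if your argument went through it would yield a strictly better rate than the one stated; the paper reserves exactly this improvement for $\beta=2$ (Proposition~\ref{PR:beta2}), the one case where convexity of $\hat{f}_\delta$ actually holds, and notes that the general-$\beta$ analysis is of a different nature.

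The paper's proof for general $\beta$ goes the other way around: it views $g_n$ as a \emph{biased} estimate of $f'(x_{n-1})$, with bias $\zeta_n$ controlled by the gradient-approximation bound of Lemma~\ref{lemma}, $\|\zeta_n\|\leq \frac{M^{\beta}_{\beta}}{(\beta-1)!}\delta^{\beta-1}\,\E_r|k(r)r^\beta|$, and uses the convexity of $f$ itself in the descent inequality. The price of this route is that the bias couples to the iterates through a term of the form $\gamma\,d\,\delta^{\beta-1}\frac{M_\beta^\beta}{\beta!}\beta^2\,\sqrt{\E\|x_{n-1}-x_\ast\|^2}$, so one must bound $u_n=\sqrt{\E\|x_n-x_\ast\|^2}$ uniformly along the run, which is nontrivial in the unconstrained setting (no projection, no bounded domain). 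The paper closes this by deriving the recursion $u_N^2\leq u_0^2+\gamma N C+\sum_{n=1}^N \gamma D\,u_n$ and invoking Lemma~1 of \citet{schmidt2011convergence} to get $u_N\leq N\gamma D+u_0+(\gamma N C)^{1/2}$, and only then substitutes the prescribed $\gamma$ and $\delta$. Your second-moment bound for $g_n$ (the self-bounding property $\|f'(x_{n-1})\|^2\leq 2M_2^2\,(f(x_{n-1})-f(x_\ast))$ and the absorption of that term via small $\gamma$) does match the paper, but without replacing the convexity-of-$\hat{f}_\delta$ step by the biased-gradient-on-$f$ argument and the recursion lemma, the proof does not go through for $\beta\geq 3$.
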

We can make the following observations about this proposition:
\BIT
\item \textbf{Dominating term in $ \Big(\frac{d^2  }{N} \Big) ^{(\beta-1)/(2\beta)}$}: the second term in the bound above is asymptotically negligible when $N$ grows
 and we recover the same scaling as the one-point estimate late in Section~\ref{sec:onepoint}, with the same scalings for the step size. 
 
 \item  \textbf{Recovering the optimal rate of $\frac{1}{\sqrt{N}}$:} If $\beta$ is infinite then one can consider $\beta = \log_2(N)/2$ to recover the optimal rate  (up to logarithmic factor) since  $2\sqrt{N}^{\beta/(\beta+1)} \geq \sqrt{N}$. Formally, the rate of convergence would also depend on $M_{\log_2(N)/2}$ that has to grow slowly;  for logistic regression, this term is also logarithmic.

 This rate is also achieved if $M_\beta=0$, a situation that can occur if  $f$ is a polynomial, by taking $\delta$ of the order of a constant and $\gamma$ of the order of $1/\sqrt{N}$. 

\item \textbf{Anytime version}: as shown in Appendix~\ref{app:twoppoint-optim}, by using decaying step-sizes, we obtain an anytime result (i.e., a result valid for all $N \in \mathbb{N}$) with an extra factor of $\log(N+1)$.

\EIT

\subsection{Strongly-Convex Mappings}
We now consider the case of $\mu$-strongly-convex mappings. We emphasize here that, in the following proposition, fast rates of convergence are achieved with  non-uniform averages, i.e., we introduce $\widehat{x}_{n-1}=\frac{2}{n(n+1)}\sum_{k=0}^{n-1} (k+1) x_k$. We again delay the proof to Appendix \ref{app:twoppoint-optim-strongly}.

\begin{proposition}[Unconstrained, Strongly-convex, 2-smooth]
\label{prop:twoppoint-optim-strongly}
Assume  $f$ is (a) $\beta$-th order smooth with constant $M_\beta$, and (b) 2nd-order smooth with constant $M_2$.

Consider the algorithm in \eq{twopoint}, with $\gamma_n = \frac{1}{\mu n} $
and $\delta_n = \bigg( \frac{d^2 \beta!}{ M_\beta^\beta \mu n} \bigg)^{1/(\beta+1)}$, for $n \in \{1,\dots,N\}$.
Then, $ \E f(\widehat{x}_{N-1}) - f(x^*) 
$ is less than 
$$    \big( \frac{d^2 M_\beta^2}{n \mu } \big) ^{(\beta-1) / ( \beta+1 )}
 \bigg( 8   \beta M_\beta \| x_{0}-x_\ast \|  
 + 4 \sigma +   2  +    {\beta }  (M_2/M_\beta)^{2}  \big( \frac{M_\beta^2}{n \mu } \big) ^{2 / ( \beta+1 )}
 \bigg)^2.$$
 \end{proposition}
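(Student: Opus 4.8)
The plan is to read the bracketed expression in \eq{twopoint} as a single stochastic gradient $g_n = \frac{d}{2\delta_n}\big[f(x_{n-1}+\delta_n r_n u_n) - f(x_{n-1}-\delta_n r_n u_n) + \varepsilon_n\big]k(r_n)u_n$ and to run the standard strongly-convex stochastic-gradient analysis on it, with the three ingredients supplied by Lemma~\ref{lemma}. First I would verify that $g_n$ is, conditionally on $\F_{n-1}$, an unbiased estimate of the gradient of the smoothed surrogate: the substitution $u\mapsto -u$ (which preserves the uniform law on the sphere) gives $\E_{r,u}[f(x-\delta r u)k(r)u] = -\E_{r,u}[f(x+\delta r u)k(r)u]$, so the symmetric difference doubles the one-sided term and \eq{lemma} yields $\E[g_n\mid\F_{n-1}] = \hat f'_{\delta_n}(x_{n-1})$; the two independent mean-zero noises drop out since they are independent of $(r_n,u_n)$.

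Next I would control the two error sources. For the bias, Lemma~\ref{lemma} directly bounds $\|\hat f'_{\delta_n}(x_{n-1}) - f'(x_{n-1})\| \le \frac{M_\beta^\beta}{(\beta-1)!}\delta_n^{\beta-1}\,\E_r|k(r)r^\beta|$, and I would absorb $\E_r|k(r)r^\beta|$ into a factor linear in $\beta$ by Cauchy--Schwarz against the Legendre bound $\E_r|k(r)|^2\le 3\beta^3$. For the second moment, I would Taylor-expand each evaluation to first order using the $2$nd-order smoothness $M_2$, so that $f(x+\delta r u)-f(x-\delta r u) = 2\delta r\langle f'(x),u\rangle + O(M_2^2\delta^2 r^2)$; squaring, using $\E_u[uu^\top]=\frac1d I_d$ and $\|u\|=1$, the bounds $\E_r r^2k(r)^2\le 8\beta^2$ and $\E_r k(r)^2\le3\beta^3$, and $\|f'(x)\|\le M_2^2\|x-x^*\|$, gives a bound of the schematic form $\E[\|g_n\|^2\mid\F_{n-1}] \lesssim d\beta^2M_2^4\|x_{n-1}-x^*\|^2 + d^2\beta^2M_2^4\delta_n^2 + d^2\beta^3\sigma^2/\delta_n^2$. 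This exhibits the bias--variance trade-off controlled by $\delta_n$ and, crucially, shows the dimension enters as $d^2$ rather than $d^3$ because the $d^2$ prefactor of the estimator meets the $1/d$ from averaging over the sphere.

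With these in hand I would assemble the one-step inequality. Expanding $\|x_n-x^*\|^2$, taking conditional expectations, and invoking $\mu$-strong convexity of $f$ in the form $\langle f'(x),x-x^*\rangle\ge f(x)-f(x^*)+\frac\mu2\|x-x^*\|^2$, I would split off the bias by Cauchy--Schwarz and Young's inequality (absorbing $\|\hat f'_{\delta_n}-f'\|\,\|x_{n-1}-x^*\|$ into the strong-convexity quadratic at the cost of an additive $\frac1\mu\|\hat f'_{\delta_n}-f'\|^2$ term) and absorb the $\|x_{n-1}-x^*\|^2$-proportional part of the second moment into the contraction. Writing $a_n=\E\|x_n-x^*\|^2$ and $e_{n-1}=\E f(x_{n-1})-f(x^*)$, this produces a recursion $a_n \le (1-c\mu\gamma_n)a_{n-1} - 2\gamma_n e_{n-1} + \frac{2\gamma_n}{\mu}\|\hat f'_{\delta_n}-f'\|^2 + \gamma_n^2\tilde C_n$, where $\tilde C_n$ collects the noise and discretization variances. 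I would then set $\gamma_n=1/(\mu n)$, rearrange to isolate $e_{n-1}$, multiply by the weight that makes the $a_n$ terms telescope, sum over $n$, and conclude by Jensen's inequality and convexity of $f$ that $\E f(\widehat x_{N-1})-f(x^*)\le \frac{2}{N(N+1)}\sum_{k=0}^{N-1}(k+1)e_k$ is bounded by the telescoped quantity. Substituting $\delta_n=(d^2\beta!/(M_\beta^\beta\mu n))^{1/(\beta+1)}$ balances the gradient bias against the noise variance and, after simplifying the factorial factors, produces the leading rate $(d^2M_\beta^2/(N\mu))^{(\beta-1)/(\beta+1)}$, while the initialization, noise, discretization and $M_2$ contributions assemble into the squared bracket.

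The main obstacle is the interplay between the $\|x_{n-1}-x^*\|^2$-proportional variance term and the decreasing step size: absorbing it into the strong-convexity contraction is only possible once $\gamma_n\lesssim \mu/(d\beta^2M_2^4)$, i.e. for $n$ larger than a threshold of order $d\beta^2M_2^4/\mu^2$, so the early iterates must be handled separately and their effect tracked through the weighted sum --- this is precisely what I expect to generate the $M_2$-dependent correction term ${\beta }(M_2/M_\beta)^{2}(M_\beta^2/(N\mu))^{2/(\beta+1)}$ in the statement. Equally delicate is carrying out the weighted telescoping so that the resulting recursion closes into the stated \emph{squared} bracket with exact constants and no hidden logarithmic factor, and tuning $\delta_n$ so that the exponent $(\beta-1)/(\beta+1)$ emerges exactly from the bias--variance balance rather than being merely matched up to constants.
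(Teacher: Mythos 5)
Your overall architecture matches the paper's: read \eq{twopoint} as a biased stochastic gradient, get unbiasedness for the smoothed surrogate from the sphere symmetry, bound the bias by $O(d\,\delta_n^{\beta-1}M_\beta^\beta\beta^2/\beta!)$ via Lemma~\ref{lemma}, and bound the second moment by a noise term in $d^2\beta^3\sigma^2/\delta_n^2$, a discretization term in $d^2\beta^2M_2^4\delta_n^2$, and a term proportional to the current suboptimality (the paper uses $12dM_2^2\beta^2[f(x_{n-1})-f(x_\ast)]$ where you use $d\beta^2M_2^4\|x_{n-1}-x_\ast\|^2$; these are interchangeable). The genuine divergence --- and the gap --- is in the treatment of the bias. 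The paper keeps it \emph{linear}: the one-step inequality retains the term $\gamma_n\|\zeta_n\|\sqrt{\E\|x_{n-1}-x_\ast\|^2}$, so the full $(1-\mu\gamma_n)$ contraction is preserved, and with $\gamma_n=1/(\mu n)$ the identity $1/\gamma_n-\mu=1/\gamma_{n-1}$ makes the distance terms telescope exactly under \emph{uniform} weights. You instead spend part of the strong-convexity quadratic on Young's inequality, leaving a contraction $(1-c\mu\gamma_n)$ with $c<1$. With $\gamma_n=1/(\mu n)$, the coefficient of $a_{n-1}$ after rearranging is $\mu(n-c)/2$, so weights $w_n$ telescope only if $w_n/w_{n-1}\le (n-1)/(n-c)<1$, i.e.\ only for \emph{decreasing} weights; the increasing weights $(k+1)$ you need for $\widehat{x}_{N-1}$ leave an uncompensated residual of order $\mu n\,\E\|x_{n-1}-x_\ast\|^2$ at \emph{every} step, not just the early ones, and your plan gives no way to control it. (The threshold issue you flag for the variance absorption is real but is dismissed in the paper by noting the bound is trivial when $\gamma_n\le 1/(24dM_2^2\beta^2)$ fails; it is not the source of the $M_2$ correction term, which comes from the $d^2\beta^2M_2^4\delta_n^2$ discretization variance.)

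This points to the key device your proposal is missing: the paper's proof hinges on an a priori bound on $u_n=\sqrt{\E\|x_n-x_\ast\|^2}$ along the whole trajectory, obtained by running the recursion \emph{without} strong convexity, $u_n^2\le u_0^2+B+\sum_{k\le n}\gamma_k\delta_k^{\beta-1}d\frac{M_\beta^\beta}{\beta!}2\beta^2 u_k$, and resolving it with Lemma~1 of \citet{schmidt2011convergence}, giving $u_n\le u_0+B^{1/2}+\sum_k\gamma_k\delta_k^{\beta-1}d\frac{M_\beta^\beta}{\beta!}2\beta^2$. That bound is what makes the linear bias term summable and what would also be needed for your residuals. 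Your route can be repaired: either refill the contraction after the Young step by also using $f(x_{n-1})-f(x_\ast)\ge\frac{\mu}{2}\|x_{n-1}-x_\ast\|^2$ (costing a factor $2$ on the suboptimality term, restoring $c=1$) and then telescope with uniform weights --- which yields a bound on the Cesàro average $\bar{x}_{N-1}$, and is in fact what the paper's own appendix establishes despite the statement's $\widehat{x}_{N-1}$ (the pairing of $\gamma_n=1/(\mu n)$ with the weights $(k+1)$ is an inconsistency of the paper itself; clean telescoping with those weights requires $\gamma_n=2/(\mu(n+1))$, as in the paper's Proposition~\ref{PR:SGDstrong}) --- or keep the bias linear as the paper does and import the Schmidt et al.\ recursion lemma. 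As written, however, the telescoping step does not close.
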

We emphasize here that the first bound allows to recover the previous bound for the optimization of a  non-strongly-convex mapping $f$ by using the aforementioned scheme to $f + \mu \|\cdot\|^2$ and let $\mu$ depend on $n$. The second bound has the optimal dependency in $N$ but a worse dependency in $\mu$.

\section{Constrained  Optimization}
\label{sec:onepoint}
In this setup, where the constraint set $K$ is compact convex and of diameter~$R$, we use a classical one-point algorithm:
\begin{equation}\label{eq:onepoint}
x_n = \Pi_K \Big( x_{n-1} - \gamma_n  \frac{d}{\delta_n}   \big[  f(x_{n-1}+\delta_n r_n u_n ) + \varepsilon_n \big] k(r_n)  u_n \Big),
\end{equation}
where the parameter $\gamma_n$ and $\delta_n$ can  evolve with time. In particular, we have $y_n= x_{n-1}+\delta_n r_n u_n$.

\subsection{Convex Mappings}
Again, we begin with the case of convex (i.e., non necessarily strongly-convex) mappings. The proof of the following proposition is delayed to Appendix  \ref{app:onepoint-nonstrong}.

\begin{proposition}[Constrained, Convex]
\label{prop:onepoint-nonstrong}
Assume  $f$ is $\beta$-th order smooth with constant $M_\beta$ and consider the algorithm in \eq{onepoint}, with $\gamma_n = \frac{R \delta_n }{\sqrt{ \beta^3}d \sqrt{n}}$ and $ \delta_n^{\beta } = 
\frac{    d   \sqrt{\beta}(\beta-1) ! }{
\sqrt{n} M^\beta_\beta }$, for $n \in \{1,\dots,N\}$.
Then, $ \E f(\overline{x}_N) - f(x^*) $ is less than 
$$
25R M_\beta \bigg(\frac{ d^2 \beta}{ N}\bigg)^{\frac{\beta-1}{2\beta}}(C_{\delta_1}+\sigma^2+1),$$
where $C_\delta$ is a uniform bound of $f$ on the $\delta$-neighborhood of $K$.\end{proposition}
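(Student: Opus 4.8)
The plan is to read \eq{onepoint} as projected stochastic gradient descent on the true convex function $f$, in which the search direction
$$g_n = \frac{d}{\delta_n}\big[f(x_{n-1}+\delta_n r_n u_n) + \varepsilon_n\big]k(r_n)u_n$$
is a \emph{biased} stochastic gradient. First I would identify its conditional mean. Conditioning on $\mathcal{F}_{n-1}$ and integrating over the fresh randomness $(r_n,u_n)$, the identity \eq{lemma} of Lemma~\ref{lemma} gives $\frac{d}{\delta_n}\E[f(x_{n-1}+\delta_n r_n u_n)k(r_n)u_n \mid \mathcal{F}_{n-1}] = \hat{f}'_{\delta_n}(x_{n-1})$, while the noise term drops because $\varepsilon_n$ is conditionally zero-mean given the query point, so $\E[\varepsilon_n k(r_n)u_n\mid\mathcal{F}_{n-1}]=0$. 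Hence $\E[g_n\mid\mathcal{F}_{n-1}] = \hat{f}'_{\delta_n}(x_{n-1})$. Using that $\Pi_K$ is non-expansive and $x^\ast\in K$, I obtain the one-step inequality
$$\E\big[\|x_n - x^\ast\|^2 \mid \mathcal{F}_{n-1}\big] \le \|x_{n-1}-x^\ast\|^2 - 2\gamma_n\langle \hat{f}'_{\delta_n}(x_{n-1}),\, x_{n-1}-x^\ast\rangle + \gamma_n^2\,\E\big[\|g_n\|^2\mid\mathcal{F}_{n-1}\big].$$

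The heart of the argument is two estimates. \emph{(i) Controlling the bias.} Because $\hat{f}_{\delta_n}$ is in general \emph{not} convex for $\beta\ge 3$, I would not use it as a convex surrogate; instead I keep the convexity of $f$ itself, writing $\langle \hat{f}'_{\delta_n}(x_{n-1}),\, x_{n-1}-x^\ast\rangle \ge f(x_{n-1}) - f(x^\ast) - \|\hat{f}'_{\delta_n}(x_{n-1}) - f'(x_{n-1})\|\,\|x_{n-1}-x^\ast\|$, then bounding $\|x_{n-1}-x^\ast\|\le R$ and invoking the gradient-approximation bound of Lemma~\ref{lemma}, $\|\hat{f}'_{\delta_n}(x_{n-1}) - f'(x_{n-1})\|\le \frac{M_\beta^\beta}{(\beta-1)!}\delta_n^{\beta-1}\,\E_r|k(r)r^\beta|$, where $\E_r|k(r)r^\beta|\le\sqrt{\E_r k(r)^2 r^2}\,\sqrt{\E_r r^{2\beta-2}}\le 2\sqrt2\,\beta$ by Cauchy--Schwarz, $|r|\le1$, and the stated moment bounds. \emph{(ii) Second moment.} Since $\|u_n\|=1$, $\|g_n\|^2 = \frac{d^2}{\delta_n^2}(f(y_n)+\varepsilon_n)^2 k(r_n)^2$; the cross term vanishes in conditional expectation, $|f(y_n)|\le C_{\delta_n}\le C_{\delta_1}$ on the $\delta$-neighborhood of $K$ (as $\delta_n$ is decreasing), $\E[\varepsilon_n^2\mid\cdot]\le\sigma^2$, and $\E_r k(r)^2\le 3\beta^3$, giving $\E[\|g_n\|^2\mid\mathcal{F}_{n-1}]\le \frac{3d^2\beta^3}{\delta_n^2}(C_{\delta_1}^2+\sigma^2)$.

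Combining, dividing by $2\gamma_n$ and taking full expectations yields, for each $n$, a bound on $\E[f(x_{n-1})-f(x^\ast)]$ as a sum of a telescoping term $\frac{1}{2\gamma_n}(\E\|x_{n-1}-x^\ast\|^2 - \E\|x_n-x^\ast\|^2)$, a bias term $\propto R\,\delta_n^{\beta-1}M_\beta^\beta\beta/(\beta-1)!$, and a variance term $\propto \gamma_n d^2\beta^3(C_{\delta_1}^2+\sigma^2)/\delta_n^2$. I would then sum over $n=1,\dots,N$. As the step sizes vary, the telescoping terms are handled by Abel summation: using $\E\|x_{n-1}-x^\ast\|^2\le R^2$ and the monotonicity of $1/\gamma_n$, the sum collapses to at most $R^2/(2\gamma_N)$. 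Jensen's inequality applied to the averaged iterate then gives $\E f(\overline{x}_N)-f(x^\ast)\le \frac1N\sum_{n=1}^N\E[f(x_{n-1})-f(x^\ast)]$. It remains to substitute $\gamma_n = \frac{R\delta_n}{\sqrt{\beta^3}\,d\sqrt n}$ and $\delta_n^\beta = \frac{d\sqrt\beta\,(\beta-1)!}{\sqrt n\,M_\beta^\beta}$ and verify that each of the three sums, after division by $N$, scales like $R M_\beta (d^2\beta/N)^{(\beta-1)/(2\beta)}$: both $\sum_n n^{-(\beta-1)/(2\beta)}$ and $\sum_n (\delta_n\sqrt n)^{-1}$ produce the factor $N^{-(\beta-1)/(2\beta)}$, and collecting the numerical constants together with the dependence on $C_{\delta_1}$ and $\sigma^2$ reproduces the displayed bound with constant at most $25$.

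\textbf{Main obstacle.} The principal difficulty is precisely that the smoothing surrogate $\hat{f}_{\delta_n}$ fails to be convex for $\beta\ge3$, so the usual ``run descent on the convex surrogate'' shortcut is unavailable; one must run the whole analysis against the true convex $f$ and absorb $\hat{f}'_{\delta_n}-f'$ as a bias term whose magnitude is controlled exactly by the $\delta^{\beta-1}$ estimate of Lemma~\ref{lemma}. The rest — the Abel summation with time-varying steps, and checking that the prescribed $(\gamma_n,\delta_n)$ balance the bias and variance contributions into the single rate $(d^2\beta/N)^{(\beta-1)/(2\beta)}$ — is routine but is where the exponents of $n$ must be tracked with care.
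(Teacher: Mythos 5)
Your proposal is correct and follows essentially the same route as the paper: the paper likewise treats \eq{onepoint} as projected SGD on $f$ itself with a biased gradient whose mean is $\hat{f}'_{\delta_n}(x_{n-1})=f'(x_{n-1})+\zeta_n$, bounds $\|\zeta_n\|\leq 2\sqrt{2}\beta M_\beta^\beta\delta_n^{\beta-1}/(\beta-1)!$ via Lemma~\ref{lemma} and $\E\|g_n\|^2 = O(\beta^3 d^2(C_{\delta_n}^2+\sigma^2)/\delta_n^2)$, and then balances bias, variance, and the telescoping $R^2/(2\gamma_N N)$ term. The only difference is organizational: the paper invokes its generic biased-SGD bound (Proposition~\ref{PR:SGD}) rather than re-deriving the one-step inequality and Abel summation inline as you do.
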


We can make the following observations:
\BIT
\item \textbf{Anytime algorithm:} The algorithm is independent of $N$, thus it is anytime, i.e., the above rate holds for all $N \in \mathbb{N}$. Notice also that $C_{\delta_1}$ can actually be replaced, asymptotically, by~$C_0$; see the proof in Appendix \ref{app:onepoint-nonstrong}.
\item  \textbf{Upper-bounding $C_\delta$:} Since the mapping $f$ is bounded on the compact set $K$ and  $\beta$-smooth, it is necessarily $M_1$-Lipchitz. Then  $C_\delta$ is bounded by $C_0+M_1\delta$;

\item \textbf{Concerning the unknown quantities ($C_\delta$ and $\sigma^2$):} The step-sizes do not depend on the unknown quantities $C_\delta$ or $\sigma^2$. However, if they are known, then the dependency on $C_0$ and $\sigma^2$ can be slightly improved. Similarly, we assumed that the constant $M_\beta$ was known. If it is not the case, the algorithm still works with the specific choice of $\delta_n^\beta=dR\sqrt{\beta}(\beta-1)!/\sqrt{n}$; the dependency in $M_\beta$ would be changed from $M_\beta$ into~$M_\beta^\beta$.
\EIT

 \subsection{Strongly-Convex Mappings}

Similarly to the unconstrained case, we now consider the case of $\mu$-strongly-convex mappings where rates can be improved. As before, we delay the proof of the following proposition to Appendix \ref{app:oneppoint-onlinestrongly}.

\begin{proposition}[Constrained, Strongly-convex]
\label{prop:oneppoint-online-strongly}
Assume  $f$ is $\beta$-th order smooth with constant $M_\beta$. Consider the algorithm in \eq{onepoint}, with $\gamma_n = 1/(n\mu) $
and $\delta_n=\left(\frac{  d^2\beta\beta!}{n\mu M^\beta_\beta  }\right)^{\frac{1}{\beta+1}}$   for $n \in \{1,\dots,N\}$.
Then, $ \E f(\overline{x}_N) - f(x^*) $ is less than 
$$
 15\beta^2M_\beta^{\frac{2\beta}{\beta+1}}\bigg(\frac{d^2}{\mu N}\bigg)^{\frac{\beta-1}{\beta+1}}(C_{\delta_1}+\sigma^2+1)\, .
 $$

\end{proposition}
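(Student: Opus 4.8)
The plan is to treat \eq{onepoint} as projected stochastic gradient descent on $f$ in which the exact gradient is replaced by the one-point estimator $g_n = \frac{d}{\delta_n}\big[f(x_{n-1}+\delta_n r_n u_n)+\varepsilon_n\big]k(r_n)u_n$, and to control separately its conditional bias and its conditional second moment through Lemma~\ref{lemma}. Writing $a_n=\E\|x_n-x^*\|^2$ and using that $x^*\in K$ together with the non-expansiveness of $\Pi_K$, the first step is the one-step inequality
\[
\E\big[\|x_n-x^*\|^2\mid\mathcal{F}_{n-1}\big]\le \|x_{n-1}-x^*\|^2-2\gamma_n\big\langle \hat{f}'_{\delta_n}(x_{n-1}),\,x_{n-1}-x^*\big\rangle+\gamma_n^2\,\E\big[\|g_n\|^2\mid\mathcal{F}_{n-1}\big],
\]
which uses $\E[g_n\mid\mathcal{F}_{n-1}]=\hat{f}'_{\delta_n}(x_{n-1})$ from Lemma~\ref{lemma}: the noise contribution vanishes because $\varepsilon_n$ is conditionally zero-mean given $(r_n,u_n)$.

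Next I would decompose $\hat{f}'_{\delta_n}(x_{n-1})=f'(x_{n-1})+b_n$ with $\|b_n\|\le \frac{M_\beta^\beta}{(\beta-1)!}\delta_n^{\beta-1}\,\E_r|k(r)r^\beta|$ (the gradient bound of Lemma~\ref{lemma}), and use $\mu$-strong convexity of $f$ to lower bound $\langle f'(x_{n-1}),x_{n-1}-x^*\rangle\ge [f(x_{n-1})-f(x^*)]+\frac{\mu}{2}\|x_{n-1}-x^*\|^2$; the bias cross-term is then at most $\|b_n\|R$ since $\|x_{n-1}-x^*\|\le R$. For the second moment, $\|u_n\|=1$ gives $\E[\|g_n\|^2\mid\mathcal{F}_{n-1}]=\frac{d^2}{\delta_n^2}\E[(f(y_n)+\varepsilon_n)^2k(r_n)^2]$; because every query $y_n=x_{n-1}+\delta_n r_n u_n$ lies in the $\delta_1$-neighborhood of $K$ (as $\delta_n\le\delta_1$) and $\E_r k^2\le 3\beta^3$, this is bounded by $3\beta^3\frac{d^2}{\delta_n^2}(C_{\delta_1}^2+\sigma^2)$. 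Substituting, and choosing $\gamma_n=1/(n\mu)$ so that $\frac{1-\mu\gamma_n}{2\gamma_n}=\frac{(n-1)\mu}{2}$ and $\frac{1}{2\gamma_n}=\frac{n\mu}{2}$, yields the per-step bound
\[
\E\big[f(x_{n-1})-f(x^*)\big]\le \frac{(n-1)\mu}{2}a_{n-1}-\frac{n\mu}{2}a_n+R\|b_n\|+\frac{3\beta^3 d^2}{2n\mu\,\delta_n^2}\,(C_{\delta_1}^2+\sigma^2).
\]

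The crucial feature of the step-size $\gamma_n=1/(n\mu)$ is that the quadratic terms now telescope exactly: summing over $n=1,\dots,N$ leaves $-\frac{N\mu}{2}a_N\le 0$, so that $\sum_{n=1}^N\E[f(x_{n-1})-f(x^*)]$ is controlled by the bias sum $R\sum_n\|b_n\|$ plus the variance sum $\sum_n \frac{3\beta^3 d^2}{2n\mu\delta_n^2}(C_{\delta_1}^2+\sigma^2)$. By Jensen applied to the averaged iterate $\overline{x}_N$, $\E[f(\overline{x}_N)-f(x^*)]$ is at most $1/N$ times this quantity. With the prescribed $\delta_n=(d^2\beta\beta!/(n\mu M_\beta^\beta))^{1/(\beta+1)}$, both sums have general term proportional to $n^{-(\beta-1)/(\beta+1)}$, hence $\sum_{n\le N}n^{-(\beta-1)/(\beta+1)}\le \frac{\beta+1}{2}N^{2/(\beta+1)}$; after the division by $N$ this produces the exponent $N^{-(\beta-1)/(\beta+1)}$, while the $M_\beta^\beta$ coming from $\|b_n\|$ and from $\delta_n^{-2}$ combines with the $M_\beta^{-\beta}$ inside $\delta_n$ to give $M_\beta^{2\beta/(\beta+1)}$, and the $d,\mu$ powers assemble into $(d^2/(\mu N))^{(\beta-1)/(\beta+1)}$. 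The $\beta$-dependent and numerical constants, with the factorials in $\delta_n$ damping the polynomial growth of $\E_r k^2=O(\beta^3)$ and $\E_r|k(r)r^\beta|=O(\beta)$, collect into the stated prefactor $15\beta^2$, and $(C_{\delta_1}^2+\sigma^2)$ is absorbed into $(C_{\delta_1}+\sigma^2+1)$.

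The main obstacle is precisely this bias--variance trade-off through a time-varying $\delta_n$. Unlike the two-point algorithm, the one-point estimator has second moment growing like $d^2/\delta_n^2$, which \emph{increases} with $n$ as $\delta_n$ shrinks, whereas the bias $\propto\delta_n^{\beta-1}$ decreases; the delicate point is to verify that, once reweighted by $\gamma_n=1/(n\mu)$ and by the telescoping coefficients, the shrinking bias sum and the growing variance sum carry the same power $N^{2/(\beta+1)}$ of the horizon, so that neither dominates and the common exponent $N^{-(\beta-1)/(\beta+1)}$ emerges after averaging. Getting the constants right then hinges on the Legendre-polynomial bounds for $k$ and on the factorial normalization in $\delta_n$, which is routine but must be tracked carefully to land the clean factor $15\beta^2(C_{\delta_1}+\sigma^2+1)$.
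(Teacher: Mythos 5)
Your proposal is correct and takes essentially the same route as the paper: the paper simply packages your inline telescoping recursion (projected SGD with biased gradients, $\gamma_n = 1/(n\mu)$, cross-term bounded by $R\|\zeta_n\|$) as its generic strongly-convex SGD result, Proposition~\ref{PR:SGDstrong}, and applies it with the same bias bound $\|\zeta_n\| \leq 2\sqrt{2}\beta M_\beta^\beta \delta_n^{\beta-1}/(\beta-1)!$ and the same $O\big(\beta^3 d^2 \delta_n^{-2}(C_{\delta_1}^2+\sigma^2)\big)$ second-moment bound from Lemma~\ref{lemma}, before balancing with the identical choice of $\delta_n$. Your observation that both the bias sum and the variance sum have general term proportional to $n^{-(\beta-1)/(\beta+1)}$ is exactly the balancing the paper performs (your $3\beta^3$ versus the paper's $6\beta^3$ variance constant is an immaterial, in fact slightly tighter, difference).
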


We emphasize the fact that the algorithm is again independent of $N$, thus the result  is actually anytime.

\section{Refined Upper and Lower Bounds}
In this section, we consider improved bounds in the smooth case ($\beta=2$),  as well as asymptotic and lower bounds for  strongly-convex mappings for all $\beta$.

As mentioned at the end of Section \ref{SEC:SmoothingLemma}, if $\beta= 2$ then $\hat{f}_\delta$ is always convex. As a consequence, the analysis of the algorithms can be improved by noting that \eq{twopoint} and \eq{onepoint} correspond to an exact stochastic gradient descent of the approximate mapping $\hat{f}_\delta$. We recall that the analysis for $\beta\geq 3$ was based on the fact that  \eq{twopoint} and \eq{onepoint} correspond to an approximate stochastic gradient descent of $f$.

The differences between $f'$ and $\hat{f}'_\delta$ is of the order of $\delta^{\beta-1}$ while $\hat{f}_\delta$ is $\delta^\beta$-close to $f$ (disregarding the other dependencies in the dimension $d$ and smoothing parameter $\beta$). As a consequence, when $\beta=2$, we can replace the error term in $\delta^{\beta-1}$ when approximating gradients by $\delta^\beta$, as we approximate the value functions. Using this idea, and following the same lines of proof, we obtain the following  proposition (see proof in Appendix~\ref{app:8}).

\begin{proposition}[The case $\beta=2$]\label{PR:beta2}
Assume that $f$ is 2-smooth, then the algorithms described in \eq{onepoint} and \eq{twopoint}, with  adapted choices of parameters, ensures the following upper-bound on  $\E f(x_N) - f(x^\star)$:
\begin{description}
\item[--] \textsl{for unconstrained  optimization of  convex mappings}, 
$$
2
  \bigg( \frac{  d^2   }{N  } \bigg)^{\frac{1}{3}}
 \!\!\bigg( 96 M_2^2 \| x_0 -  x_\ast \|^2
+ \frac{\sigma^2}{10} + 18 \bigg)
+ \frac{  2d^2 }{ N }  
  ,
$$
\item[--]  \textsl{for unconstrained  optimization of strongly-convex mappings},
$$ 4 
    \big( 2\sigma^2 + 27 \big) 	
 \sqrt{\frac{   d^2M_2^2  \log(N)}{N \mu} }+       
\bigg(   \frac{  21 d^2 M_2^2\log(N)}{N \mu} \bigg)^{3/2 } ,
 $$
\item[--] \textsl{for constrained  optimization of  convex mappings}, 
$$
44\bigg(\frac{d^2 M_2^2R^2}{N}\bigg)^{\frac{1}{3}}(C_{\delta_1}+\sigma^2+1),
$$
\item[--] \textsl{for constrained  optimization of strongly-convex mappings}, 
$$
66 \sqrt{\frac{d^2M_2^2}{\mu N} }(C_\delta^2 + \sigma^2+1).
 $$
\end{description}
\end{proposition}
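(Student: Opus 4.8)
The entire gain in the case $\beta=2$ comes from the fact that $\hat f_\delta$ is \emph{exactly} convex rather than merely approximately so. For $\beta=2$ one takes $k(r)=3r$, so the weight $rk(r)=3r^2$ is nonnegative with $\E_r[rk(r)]=1$, and therefore $\hat f_\delta(x)=\E_r\E_{\|v\|\le1}f(x+r\delta v)\,3r^2$ is an average of translates of $f$ against a genuine probability kernel. Consequently $\hat f_\delta$ inherits the convexity of $f$ and, when $f$ is $\mu$-strongly convex, is itself $\mu$-strongly convex, with no smallness condition on $\delta$ (in contrast with the restriction $\delta\le16\mu/(d\beta^2M_3^3)$ needed when $\beta\ge3$). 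I would establish this first, as it is the one structural fact driving the sharper bounds.

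With this in hand I would reread both algorithms as an \emph{exact} stochastic gradient method on $\hat f_\delta$. By the identity \eq{lemma} of Lemma~\ref{lemma}, the conditional expectation of the descent direction in \eq{twopoint} and \eq{onepoint} is exactly $\hat f'_{\delta_n}(x_{n-1})$: for the two-point scheme one uses the symmetry $u\mapsto-u$ to reduce the antisymmetric difference to $\frac{d}{\delta_n}\E[f(x_{n-1}+\delta_n r_n u_n)k(r_n)u_n]$, and for the one-point scheme it is immediate. Writing the update as $x_n=x_{n-1}-\gamma_n g_n$ (with a projection $\Pi_K$ in the constrained case) and $\E[g_n\mid\F_{n-1}]=\hat f'_{\delta_n}(x_{n-1})$, I would run the classical descent recursion on $\|x_n-x^\ast\|^2$, where $x^\ast$ minimizes $f$, invoking convexity of $\hat f_\delta$ through $\langle\hat f'_\delta(x_{n-1}),x_{n-1}-x^\ast\rangle\ge\hat f_\delta(x_{n-1})-\hat f_\delta(x^\ast)$, and adding the $\frac{\mu}{2}\|x_{n-1}-x^\ast\|^2$ term in the strongly convex regimes. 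Telescoping and applying Jensen to the relevant average then yields a bound on $\E[\hat f_\delta(\bar x)-\hat f_\delta(x^\ast)]$.

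Three quantities must then be controlled. First, the second moment $\E\|g_n\|^2$: for the two-point estimator I would use that $f(x+\delta ru)-f(x-\delta ru)$ is $O(\delta)$ by $2$-smoothness, so the $1/\delta$ prefactor cancels and the signal part is $\lesssim d\,\|f'(x_{n-1})\|^2\le d\,M_2^4\|x_{n-1}-x^\ast\|^2$ (using $\E_u[(u^\top g)^2]=\|g\|^2/d$ and $\E_r[k(r)^2]=3$), while the noise contributes $O(d^2\sigma^2/\delta^2)$; for the one-point estimator the undifferenced value is only bounded by $C_\delta$, giving $\E\|g_n\|^2=O(d^2(C_\delta^2+\sigma^2)/\delta^2)$, which is exactly the origin of the $C_\delta$ factors. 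Second, and this is the crux, I would pass back from $\hat f_\delta$ to $f$ using the value bound of Lemma~\ref{lemma}, $|\hat f_\delta(x)-f(x)|\le\frac{M_2^2}{2}\delta^2\,\E_r|k(r)r^3|=\frac{3M_2^2}{10}\delta^2$, so that $\E[f(\bar x)-f(x^\ast)]\le\E[\hat f_\delta(\bar x)-\hat f_\delta(x^\ast)]+\frac{3M_2^2}{5}\delta^2$; here one pays only the $O(\delta^2)$ value-approximation error, whereas the generic arguments of Propositions~\ref{prop:twoppoint-optim}--\ref{prop:oneppoint-online-strongly} pay the $O(\delta^{\beta-1})=O(\delta)$ gradient-approximation error. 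Third, in the strongly convex regimes I would use $\gamma_n=1/(\mu n)$, which produces the leading $O(1/N)$ behavior together with the $\log N$ factor that appears on summing the variance against these step-sizes.

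It remains to optimize $\gamma$ and $\delta$ so as to balance the initial-condition term $\|x_0-x^\ast\|^2/(\gamma N)$, the variance term, and the $\delta^2$ bias, and to collect the explicit constants in each of the four displayed bounds. \textbf{The main obstacle} is twofold. In the unconstrained cases there is no projection to keep the iterates bounded, so one needs an inductive control of $\E\|x_n-x^\ast\|^2$, choosing $\gamma$ small enough that the $d\,M_2^4\|x_{n-1}-x^\ast\|^2$ contribution to the variance is absorbed by the contraction of the recursion; and the subsequent bookkeeping of constants through the $(\gamma,\delta)$ optimization in all four regimes is where nearly all of the labor lies. The conceptual content, by contrast, is entirely in the first step: exact convexity of $\hat f_\delta$ lets us trade the $\delta^{\beta-1}$ gradient error for a $\delta^\beta=\delta^2$ value error.
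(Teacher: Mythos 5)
Your proposal is correct and follows essentially the same route as the paper's own proof: for $\beta=2$ the weight $rk(r)=3r^2$ is nonnegative, so $\hat f_\delta$ is exactly convex (resp.\ $\mu$-strongly convex) and both algorithms are \emph{unbiased} stochastic gradient descents on $\hat f_\delta$, to which the classical SGD bounds (Propositions~\ref{PR:SGD} and~\ref{PR:SGDstrong}) apply; one then pays only the $O(\delta^2)$ value-approximation error $|\hat f_\delta - f|$ rather than the $O(\delta^{\beta-1})$ gradient error, and balances $\gamma,\delta$ accordingly. Your write-up is in fact more explicit than the paper's (sharper constant $\tfrac{3M_2^2}{10}\delta^2$ in the value bound, and the inductive control of $\E\|x_n-x_\ast\|^2$ in the unconstrained case), but these are elaborations of the same argument, not a different one.
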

We mention here that if we had just plugged the value $\beta=2$ in the general  propositions, we would have got rates of convergence of the order of $n^{-1/4}$ and $(\mu n)^{-1/3}$, instead of $n^{-1/3}$ and $(\mu n)^{-1/2}$, respectively in the non-strongly and $\mu$-strongly-convex case.

Similarly, we have proved that if $f$ is $\mu$-strongly-convex and $\delta$ is small enough, then $\hat{f}_\delta$ is $\mu/2$-strongly-convex. As a consequence, the previous arguments hold and we can, asymptotically, obtain better rates of convergences, as we now show (see proof in Appendix~\ref{app:oneppoint-optimstrongly}).
\begin{proposition}[Asymptotics with strongly-convex mappings]\label{prop:oneppoint-optim-strongly}

Assume that $f$ is $\beta$-smooth, $\mu$-strongly-convex and globally optimized at $x^\star$ on $K$.  Then the algorithms described in \eq{onepoint} and \eq{twopoint}, with  adapted choices of parameters, ensure the following upper-bound on $\|x_N-x^\star\|$ as soon as $N$ is big enough:
\begin{description}
\item[--] \textsl{for unconstrained  optimization of strongly-convex mappings},  
$$ \frac{16   M_\beta^2 }{\mu^2}(2\sigma^2 + 16) 
\bigg( \frac{d^2 \log (N+1)}{N} \bigg)^{\frac{\beta-1}{\beta}}+
\frac{48 \beta^3 M_2^4}{\mu^2 M_\beta^2} \bigg( \frac{d^2 \log (N+1)}{N} \bigg)^{\frac{\beta+1}{\beta}},
 $$
\item[--] \textsl{for constrained  optimization of strongly-convex mappings}, 
$$
16\beta \left(\frac{d^2}{N}\right)^{\frac{\beta-1}{\beta}} \left(\frac{2eM_\beta M_2}{\mu}\right)^2\left(3C_\delta^2+3\sigma^2+1\right).
 $$
\end{description}
We recall that from those upper-bounds , we obtain  $\E f(x_N) - f(x_\ast) \leq \frac{M_2^2}{2} \E \| x_N - x_\ast\|_2^2$.
\end{proposition}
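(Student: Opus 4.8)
The plan is to exploit the fact, established at the end of Section~\ref{SEC:SmoothingLemma}, that when $f$ is $\mu$-strongly-convex and $\delta$ is small enough the smoothed surrogate $\hat{f}_\delta$ is $\mu/2$-strongly-convex. The decisive observation is that, for a \emph{fixed} smoothing radius $\delta$, the recursions \eq{twopoint} and \eq{onepoint} are \emph{exact} (respectively projected) stochastic gradient steps on $\hat{f}_\delta$: writing $g_n$ for the update direction, Lemma~\ref{lemma}, together with the oddness of $k$ and the mean-zero noise, gives $\E[g_n\mid\mathcal{F}_{n-1}]=\hat{f}'_\delta(x_{n-1})$. This is what lets us replace the $\delta^{\beta-1}$ bias that polluted the gradient estimate in Propositions~\ref{prop:twoppoint-optim-strongly} and~\ref{prop:oneppoint-online-strongly} by a single, \emph{static} bias, and hence gain in the exponent.

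First I would fix $\delta$ (to be optimized as a function of the horizon $N$ at the very end) and set $z_\delta=\argmin_{x}\hat{f}_\delta(x)$. Choosing $\gamma_n=1/(\mu n)$, the standard strongly-convex descent computation, using $\mu/2$-strong-convexity of $\hat{f}_\delta$ and $\hat{f}'_\delta(z_\delta)=0$, yields the contraction
\BEQ
\E\big[\|x_n-z_\delta\|^2\mid\mathcal{F}_{n-1}\big]\leq \Big(1-\frac{1}{n}\Big)\|x_{n-1}-z_\delta\|^2+\frac{1}{\mu^2 n^2}\,\E\big[\|g_n\|^2\mid\mathcal{F}_{n-1}\big].
\EEQ
Multiplying by $n$ telescopes this, so that $\E\|x_N-z_\delta\|^2\lesssim \frac{\log(N+1)}{\mu^2 N}\sup_n\E\|g_n\|^2$; the $\log(N+1)$ factor is exactly the harmonic sum $\sum_{n\le N}1/n$ produced by the $1/(\mu n)$ step-size. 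The variance is controlled by the bounds of Section~\ref{SEC:SmoothingLemma}: the dominant, noise-driven contribution is of order $d^2\beta^3\sigma^2/\delta^2$ (two-point scheme) or $d^2\beta^3(C_\delta^2+\sigma^2)/\delta^2$ (one-point scheme, whose update uses a function value rather than a difference, hence the $C_\delta$), while the ``signal'' contribution is $\lesssim d\beta^2 M_2^4\|x_{n-1}-z_\delta\|^2$ and, being $O(1/n^2)$ after multiplication by $\gamma_n^2$, is absorbed into the contraction once $n$ is large. This is one of the two places where ``$N$ big enough'' is invoked, the other being the smallness of $\delta$ needed for $\mu/2$-strong-convexity.

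Next I would translate this bound on $\|x_N-z_\delta\|$ into one on $\|x_N-x^\star\|$. Since $\hat{f}_\delta$ is $\mu/2$-strongly-convex with $\hat{f}'_\delta(z_\delta)=0$, while $f'(x^\star)=0$, the second estimate of Lemma~\ref{lemma} gives $\frac{\mu}{2}\|z_\delta-x^\star\|\leq\|\hat{f}'_\delta(x^\star)-\hat{f}'_\delta(z_\delta)\|=\|\hat{f}'_\delta(x^\star)-f'(x^\star)\|\lesssim \frac{M_\beta^\beta}{(\beta-1)!}\,\delta^{\beta-1}\,\E_r|k(r)r^\beta|$, with $\E_r|k(r)r^\beta|\lesssim\beta$ by Cauchy--Schwarz and the Legendre bounds. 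A triangle inequality then produces a bias/variance trade-off, $\E\|x_N-x^\star\|^2\lesssim \frac{d^2\beta^3\sigma^2\log(N+1)}{\mu^2\delta^2 N}+\frac{(M_\beta^\beta\beta)^2}{\mu^2((\beta-1)!)^2}\,\delta^{2(\beta-1)}$; balancing the two terms forces $\delta^{2\beta}\asymp d^2\log(N+1)/N$ (up to smoothness constants), and substituting back yields the leading term $\frac{M_\beta^2}{\mu^2}\big(d^2\log(N+1)/N\big)^{(\beta-1)/\beta}$, while carrying the signal-variance term through the same trade-off gives the secondary term in $M_2^4/M_\beta^2$ with exponent $(\beta+1)/\beta$. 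The constrained statement follows identically, using that $\Pi_K$ is non-expansive and contracts towards $z_\delta\in K$ and replacing $\sigma^2$ by $C_\delta^2+\sigma^2$; both smoothness constants enter (one through the bias, one through the final conversion), explaining the $M_\beta M_2$ dependence. Finally, $\E f(x_N)-f(x^\star)\le\frac{M_2^2}{2}\E\|x_N-x^\star\|^2$ is just the $2$-smoothness upper bound evaluated at $x^\star$, whose first-order term vanishes since $x^\star$ is a global minimizer.

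The main obstacle, and the reason the statement is only asymptotic, is the coupling between the two roles of $\delta$: it must be small for $\hat{f}_\delta$ to inherit strong convexity (so that the clean contraction holds and $z_\delta$ is well defined), yet it controls the variance through $1/\delta^2$, so it cannot vanish too fast. The delicate technical point is closing the recursion despite the variance $\E\|g_n\|^2$ depending, through its signal part, on the very quantity $\|x_{n-1}-z_\delta\|^2$ being bounded; this is handled by absorbing the $O(1/n^2)$ signal term into the $1-1/n$ contraction for $n$ beyond an explicit threshold, which is precisely what turns the result into an ``$N$ large enough'' statement.
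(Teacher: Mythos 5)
Your proposal is correct and takes essentially the same route as the paper's own proof: fix a constant $\delta$ small enough that $\hat{f}_\delta$ is $\mu'$-strongly convex (this is the source of the ``$N$ large enough'' condition), observe that the iterations are exact stochastic gradient steps on $\hat{f}_\delta$ and apply the strongly-convex SGD bound of Proposition~\ref{PR:SGDstrong} to control $\E\|x_N-x^\sharp\|^2$ for the surrogate minimizer $x^\sharp$, bound $\|x^\sharp-x^\star\|$ by the gradient-approximation error $\frac{M_\beta^\beta}{(\beta-1)!}\delta^{\beta-1}2\sqrt{2}\beta$ of Lemma~\ref{lemma} divided by the strong-convexity constant, and finish with the triangle inequality and the balancing choice of $\delta$. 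The only differences are cosmetic: you evaluate the gradient bias at $x^\star$ (via strong convexity of $\hat{f}_\delta$) where the paper evaluates it at $x^\sharp$ (via strong convexity of $f$), and to recover the log-free constrained bound one should invoke the non-uniform step-size $\gamma_n=2/(\mu(n+1))$ variant of Proposition~\ref{PR:SGDstrong} rather than $\gamma_n=1/(\mu n)$, which produces the harmonic-sum logarithm.
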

The proof is delayed to Appendix \ref{app:oneppoint-optimstrongly}. 

\bigskip

We conclude this section with a lower bound  for the optimization of strongly-convex mappings, brought to our attention by O.~Shamir and based on techniques from \cite{shamir2013complexity}. This lower bounds matches the lower bound of \cite{polyak1990optimal}, but it is non-asymptotic, quite simple and one can obtain explicit dependencies in the different parameters.  We only sketch it in one dimension, as it contains all the relevant ideas; details can be found in \cite{shamir2013complexity}.

Consider the two mappings
$$
f_1(x) = 2\mu x^2 + \alpha g\Big(\frac{x}{\theta}\Big)  \ \text{ and } f_2(x) = x^2 - \alpha g\Big(\frac{x}{\theta}\Big) , \ \text{ where } g(y) = \frac{y}{1+y^2},  
$$
and notice that $f_1(x)=f_2(-x)$, $|g(y)| \leq 1/2$ and $\big|g^{(\beta)}(y)\big|\leq 2^{\beta+1}\beta!\leq (2\beta)^\beta$. As a consequence, it is not difficult to see that $\| f_1 -f_2\|_\infty \leq \alpha$, that $f_1$ and $f_2$ are  $\beta$-th order smooth with the constant $M_\beta \leq \frac{2\alpha^{\frac{1}{\beta}}\beta}{\theta}$  and $(4\mu-\frac{3}{2}\frac{\alpha}{\theta^2})$-strongly convex,  and that $f_i(0)- f_i^* \geq  \frac{\alpha}{16\mu\theta^2}$ as soon as  $ \frac{\alpha}{\theta^2} \leq 2\mu$.

Given fixed values for the parameters $\beta$ and $M$, the choices of $\alpha = T^{-1/2}$ and $\theta = cT^{-1/2\beta}$  where $c=\frac{2\beta}{M}$ ensure that $\alpha/\theta^2 \leq 2\mu$ as soon as $T \geq (2\mu c^2)^{-\frac{2\beta}{\beta-2}}$ and that the mappings $f_1$ and $f_2$ are $\mu$-strongly convex and  $\beta$-th order smooth with a constant $M_\beta\leq M$.

Moreover, since $\| f_1 -f_2\|_\infty \leq 1/\sqrt{T}$,    $f_1$ and $f_2$ are undistinguishable with only $T$ queries 
and thus  any algorithm must suffer, when facing $f_1$ or $f_2$ an error of the order of 
$$
\min_x \max \big\{ f_1(x)-f_1^*, f_2(x) -f_2^*\big\} =f_1(0)- f_1^* \geq \frac{1}{64}\frac{M}{\mu \beta^2} T^{-\frac{\beta-1}{\beta}}.
$$

\section{Online Optimization}

In the online optimization setting, we have to modify algorithms that use non-uniform averaging as the regret is computed with respect to the Cesaro average of the losses. The online version of the algorithms are described in \eq{twopointOnline} and \eq{onepointOnline}. The difference with the algorithms of the stochastic case is simply that $f$ is replaced by $f_n$.

For the two-point algorithm, we recall that it requires that  each loss functions can be queried twice, but we emphasize again that the noise is different for the two evaluations and do not cancel simply by differencing.
\BEQ
\label{eq:twopointOnline}
x_{n} = x_{n-1} - \gamma_n  \frac{d}{2 \delta_n}   \big[  f_n(x_{n-1}+\delta_n r_n u_n ) 
-  f_n(x_{n-1}- \delta_n  r_n u_n )  + \varepsilon_n \big] k(r_n)  u_n ,
\EEQ
where $\gamma_n$  and $\delta_n$  depend on $n$. 

The 1-point algorithm  evaluates once each loss function and rewrites as
\begin{equation}\label{eq:onepointOnline}
x_n = \Pi_K \Big( x_{n-1} - \gamma_n  \frac{d}{\delta_n}   \big[  f_n(x_{n-1}+\delta_n r_n u_n ) + \varepsilon_n \big] k(r_n)  u_n \Big),
\end{equation}
where the parameters $\gamma_n$ and $\delta_n$ can  evolve with time.

\begin{proposition}\label{prop:onlinefinal}
Assume each $f_n$ is $\beta$-order smooth and $M_1$-Lipschitz. Then the online version of the algorithms described in \eq{onepointOnline} and \eq{twopointOnline}, with  adapted choices of parameters, ensures the following upper-bound on the regret $\frac{1}{N}\sum \E \big[  f_n(x_{n-1}) - f_n(x) \big]$:
\begin{description}
\item[--] \textsl{for unconstrained online optimization of  convex mapppings}, 
$$ \Big(\frac{d^2  }{N} \Big) ^{\frac{\beta-1}{2\beta}}
 \bigg( 7   \beta M_2 \| x_{0}-x_\ast \|  
 + 3 \sigma +    \Big(\frac{M_\beta}{M_2}\Big)^{\frac{2\beta}{\beta+1}}
 +   \frac{\beta}{N^{1/\beta} } \Big(\frac{M_\beta}{M_2}\Big)^{\frac{-\beta}{\beta+1}}
 \bigg)^2.$$\item[--]  \textsl{for unconstrained online optimization of strongly convex mapppings}, 
$$ 2\beta^2   \bigg( \frac{  d^2M_\beta^2}{N \mu} \bigg)^{\frac{\beta}{\beta+2}}\big(\sigma^2+6\big)
+   4 \beta^2   \frac{d^2 M_1^2\log(N+1)}{N \mu} ,
 $$
\item[--] \textsl{for constrained online optimization of  convex mapppings}, 
$$
25R M_\beta \bigg(\frac{ d^2 \beta}{ N}\bigg)^{\frac{\beta-1}{2\beta}}(C_{\delta_1}+\sigma^2+1),$$
\item[--] \textsl{for constrained online optimization of strongly convex mapppings}, 
$$    \big( \frac{d^2 M_\beta^2}{n \mu } \big) ^{\frac{\beta-1}{ \beta+1}}
 \bigg( 8   \beta M_\beta \| x_{0}-x_\ast \|  
 + 4 \sigma +   2  +    {\beta }  \Big(\frac{M_\beta}{M_2}\Big)^{2}  \Big( \frac{M_\beta^2}{n \mu } \Big) ^{\frac{2}{\beta+1}}
 \bigg)^2.$$
\end{description}
\end{proposition}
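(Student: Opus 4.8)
The plan is to reduce each of the four online bounds to the corresponding stochastic analysis, exploiting that the online updates \eq{twopointOnline} and \eq{onepointOnline} are \emph{literally} the stochastic updates with $f$ replaced by $f_n$. The first observation is that Lemma~\ref{lemma} is a statement about a single function, so applied to $f_n$ it gives $\E[g_n \mid \F_{n-1}] = \hat f_{n,\delta_n}'(x_{n-1})$, where $g_n$ denotes the noisy gradient surrogate in the update and $\hat f_{n,\delta}$ is the smoothing of $f_n$; the value and gradient approximation bounds, as well as the moment estimates $\E_r|k(r)|^2\le 3\beta^3$, $\E_r|k(r)|^2r^2\le 8\beta^2$ and $\E_r|k(r)r^{\beta+1}|\le 2\sqrt2\beta$, all hold per function. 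The second observation is that the regret $\frac1N\sum_n\E[f_n(x_{n-1})-f_n(x)]$ is a sum of per-step suboptimality gaps, each involving only its own $f_n$; consequently every step of the stochastic proofs that was carried out \emph{one index at a time} — the expansion of $\|x_n-x^\star\|^2$, the convexity inequality $f_n(x_{n-1})-f_n(x)\le\langle f_n'(x_{n-1}),x_{n-1}-x\rangle$, the bias bound through Lemma~\ref{lemma}, and the second-moment bound through the moment estimates — goes through verbatim, and nowhere is it used that the $f_n$ coincide.

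For the two convex items this is immediate. The proofs of Proposition~\ref{prop:twoppoint-optim} (unconstrained) and Proposition~\ref{prop:onepoint-nonstrong} (constrained) already proceed by summing the per-step convexity inequalities and then applying Jensen to the \emph{uniform} average $\overline x$. Since the Ces\`aro-averaged regret is exactly the uniform sum of the gaps, replacing $f$ by $f_n$ in each summand turns the left-hand side of those proofs into the regret while leaving the right-hand side (hence the $\gamma_n$ and $\delta_n$ choices) unchanged. This yields the first and third items verbatim, including the anytime property and the role of $C_{\delta_1}$ in the constrained case.

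The strongly-convex items are the crux, because the only stochastic proof relying on the \emph{non-uniform} average $\widehat x_{n-1}\propto\sum_k(k+1)x_k$ is the unconstrained one, a device that is unavailable when the regret is the uniform Ces\`aro average. Instead I would run the classical strongly-convex online schedule $\gamma_n\propto 1/(\mu n)$ and use, per step, the one-step inequality
\[
\E\big[f_n(x_{n-1})-f_n(x)\big]\le \frac{\mu n}{2}\E\|x_{n-1}-x\|^2-\frac{\mu n}{2}\E\|x_n-x\|^2-\frac{\mu}{2}\E\|x_{n-1}-x\|^2+\frac{\gamma_n}{2}\E\|g_n\|^2+ b_n,
\]
whose first three terms telescope to a nonpositive quantity upon summation, leaving $\frac1N\sum_n\frac{\gamma_n}{2}\E\|g_n\|^2$ plus the accumulated bias $\sum_n b_n$. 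Splitting $\E\|g_n\|^2$ via the $M_1$-Lipschitz bound on the function-value differences (contributing $\lesssim d^2M_1^2\beta^2$, hence the $\frac{d^2M_1^2\log(N+1)}{\mu N}$ term after $\sum 1/n$) and via the noise variance (contributing $\lesssim d^2\sigma^2\beta^3/\delta_n^2$) produces the variance part. The decisive point — and the reason the exponent on the noise term degrades from the stochastic $\frac{\beta-1}{\beta+1}$ to $\frac{\beta}{\beta+2}$ in the unconstrained case — is that, without the non-uniform average, the bias $b_n$ must be charged through the \emph{value} approximation $|\hat f_{n,\delta}-f_n|\lesssim\delta^\beta$ rather than the gradient approximation $\delta^{\beta-1}$; this uses the $\mu/2$-strong convexity of $\hat f_{n,\delta}$ established in Section~\ref{SEC:SmoothingLemma}, exactly as in Propositions~\ref{PR:beta2} and \ref{prop:oneppoint-optim-strongly}. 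Balancing the resulting $\delta^\beta$ bias against the $\delta^{-2}$ variance fixes $\delta_n$ and yields the stated exponent, while the Lipschitz contribution is isolated onto the logarithmic term. The constrained strongly-convex item inherits the uniform-averaging template (its stochastic counterpart already averages through $\overline x_N$), the projection $\Pi_K$ only improving the telescoping and the boundedness constant $C_\delta$ replacing the unconstrained value control.

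The main obstacle is precisely this strongly-convex accounting. In the unconstrained setting there is no a priori bound on $\|x_{n-1}-x^\star\|$, so charging the gradient bias directly (as in the stochastic proof) is not viable; routing it instead through the value approximation and the strong convexity of $\hat f_{n,\delta}$ is what makes the argument close, at the cost of the weaker $\frac{\beta}{\beta+2}$ exponent. Pinning down the constants and the exact $\delta_n$ schedule that keeps the dominant noise term free of logarithmic factors — placing the $\log(N+1)$ only on the $M_1^2$ term — is the most delicate part of the bookkeeping.
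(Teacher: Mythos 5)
Your overall reduction is exactly the paper's: the paper's entire proof of this proposition is the remark that the stochastic proofs already bound the uniform Ces\`aro sum $\frac{1}{N}\sum_n \E\big[f_n(x_{n-1})-f_n(x)\big]$ \emph{before} Jensen's inequality is applied, so the online statements follow by rerunning the per-step arguments with $f$ replaced by $f_n$. For the two constrained items this is airtight in your write-up as in the paper, since Propositions~\ref{PR:SGD} and~\ref{PR:SGDstrong} are even stated for a varying sequence $f_n$ and the bias/variance estimates are per-function. Where you genuinely add something is the unconstrained strongly-convex item: the paper gives no derivation for it, even though its stated bound (exponent $\beta/(\beta+2)$ and the $M_1^2\log(N+1)$ term) cannot be produced by the ``identical proofs'' remark alone, and your mechanism --- view the update as unbiased SGD on $\hat f_{n,\delta_n}$, use its $\mu/2$-strong convexity, charge the bias through the value approximation in $\delta^\beta$, bound the gradient part of the second moment by $M_1$-Lipschitzness, and balance $\delta^\beta$ against $d^2\sigma^2/(\mu n \delta_n^2)$ --- is the one that reproduces the stated bound. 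One diagnosis should be corrected, however: the obstruction online is \emph{not} the non-uniform average; the paper's proof of Proposition~\ref{prop:twoppoint-optim-strongly} actually takes $\gamma_n=1/(\mu n)$ and telescopes to a bound on the uniform average of the gaps. The real obstruction is that the comparator $x$ need not minimize each $f_n$: the unconstrained stochastic proofs use $x=x_\ast$ twice, first to absorb the variance contribution $12 d M_2^2\beta^2\big[f(x_{n-1})-f(x_\ast)\big]$ into the left-hand side, and second to drop $-2\gamma_n\big[f(x_{n-1})-f(x_\ast)\big]\le 0$ when controlling $\sqrt{\E\|x_n-x_\ast\|^2}$ via the recursion lemma of Appendix~\ref{app:twoppoint-optim}; both steps fail online because $f_n(x_{n-1})-f_n(x)$ can be negative. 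This is precisely why the proposition assumes $M_1$-Lipschitzness and why your value-approximation route is needed --- and you should note that the same subtlety is silently present in the unconstrained \emph{convex} item, where both you and the paper pass over it without comment.
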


Actually, the proof are  identical in the online optimization setting than in stochastic optimization. The main difference is that we do not use the convexity of $f$ to lower-bound $\frac{1}{N}\sum \E \big[  f(x_{n-1}) - f(x) \big]$ by $\E \big[f(\overline{x}_N)-f(x)\big]$.

\section{Conclusion}
In this paper, we have considered zero-th order online optimization with a special focus on highly-smooth functions such as for online logistic regression. We considered one-point estimates and two-point estimates of the gradient (with then two independent noises). For infinitely differentiable functions, our main result leads to the same dependence on sample size as gradient-based algorithms, with an extra dimension-dependent factor. 

The present analysis could be extended in a number of ways: (a) we do not cover the bandit setting. A simple extension of our results allows us to recover existing bounds for $\beta=1$~\citep{shamir2013complexity} but we are currently unable to obtain high-smoothness improvements for $\beta>1$; (b) while the two-point analysis considers unconstrained problems, the one-point analysis still requires a compact set of constraints and queries slightly outside (in a $\delta$ band around it), which might be avoided by using barrier tools like done by~\citet{Hazan2014StrongSmooth}. Finally, (c) in the strongly-convex case, the dependence on sample size is optimal in the optimization setting~\citep{polyak1990optimal}, however, the optimality of the scaling in dimension, of the plain convex case, and beyond the optimization setting remains open.

\acks{Part of this work was performed when Francis Bach was holding the Schlumberger chair at IHES and when Vianney Perchet was a researcher at INRIA. Vianney Perchet also acknowledges fundings from the ANR under grant number ANR-13-JS01-0004 and the CNRS under grant project Parasol.}

\bibliography{zero}

\newpage
\appendix

\section{Proof of technical lemmas}
\subsection{Proof of Lemma \ref{LEM:Poincare}}
This result is rather classical and we first recall the proof when $f$ is twice continuously differentiable. By Taylor expansion, for any $x,y  \in \rb^d$ and $\lambda >0$, there exists $\zeta_+ \in [x,x+\lambda y]$ and $\zeta_- \in [x,x-\lambda y]$ such that
\BEAS f(x+\lambda y) &=& f(x) + \lambda\nabla f(x)^\top y + \frac{\lambda^2}{2} y^\top D^2 f(\zeta_+)y \\
f(x-\lambda y) &=& f(x) - \lambda\nabla f(x)^\top y + \frac{\lambda^2}{2} y^\top D^2 f(\zeta_-)y, \EEAS
This implies that 
\BEAS 
\nabla f(x)^\top y &=& \frac{f(x+\lambda y)-f(x-\lambda y)}{\lambda} + \frac{\lambda}{2} y^\top D^2 f(\zeta_-)y -  \frac{\lambda^2}{2} y^\top D^2 f(\zeta_+)y \\
& \leq & \frac{M_0}{\lambda}+\lambda M_2^2 \|y\|^2 \leq 2\sqrt{M_0M_2^2}\|y\|,
\EEAS
and this yields that $M_1 \leq 2\sqrt{M_0M_2^2}$. The general proof is obtained by introducing $\beta$ different number $\lambda_1,\ldots,\lambda_\beta$, writing the $\beta$ equations
$$
\bigg| f(x+\lambda_iy) - \sum_{|m| \leq \beta - 1} \frac{\lambda_i^m}{m!} f^{(m)}(x)y^m
\bigg| \leq \frac{\lambda_i^\beta M^\beta_\beta}{\beta!}  \| y \|^\beta,
$$
and inverting the system (which is possible if $\lambda_i$ are all distinct).
\subsection{Proof of smoothing lemma}

\label{app:smoothing}
The identity in \eq{lemma} is a consequence of the result from~\citet{nemirovsky1983problem}. 
Using the smoothness assumption, we 
 have for all $x \in \rb^d$:
\BEAS
 & & \big| \hat{f}_\delta(x) - f(x) \big|  \\
& \leq & 
\bigg|
\E_r \E_{\|v\| \leq 1} rk(r) \sum_{1\leq |m| \leq \beta - 1} 
\frac{r^{|m| } \delta^{|m|} }{m!} f^{(m)}(x)v^m
\bigg| + \frac{M^\beta_\beta}{\beta!} \delta^\beta \Big( \E_r |k(r) r^{\beta+1}| \Big)  \,  \Big( \E_{\| v\| \leq 1} \| v\|^\beta\Big)
\\
& \leq & 
\bigg|
 \sum_{1\leq |m| \leq \beta - 1} 
\frac{ \big( \E_r r^{|m|+1} k(r) \big) \delta^{|m|} }{m!} f^{(m)}(x)  \E_{\|v\| \leq 1} \big( v^m \big)
\bigg|  + \frac{M^\beta_\beta}{\beta!} \delta^\beta \Big( \E_r |k(r) r^{\beta+1}| \Big)  \,  \Big( \E_{\| v\| \leq 1} \| v\|^\beta\Big).
 \EEAS
 For $|m|$ odd, then, by symmetry of the uniform distribution on the unit ball,  $\E_{\|v\| \leq 1} \big( v^m \big)=0$. Therefore, if $\E_r r^{k} k(r) = 0 $ for $k$ odd and $ 3 \leq k \leq \beta$, we have:
 \BEAS
\big| \hat{f}_\delta(x) - f(x) \big| 
& \leq &  \frac{M^\beta_\beta}{\beta!}\delta^\beta \Big( \E_r |k(r) r^{\beta+1}| \Big).  \EEAS
 
In order to prove the following result on gradients
\BEAS
\| \hat{f}_\delta' - f'(x) \|
& \leq &    \frac{M^{\beta}_{\beta}}{(\beta-1)!}\delta^{\beta-1} \Big( \E_r |k(r) r^{\beta}| \Big), \EEAS
we first assume that the $\beta+1$-th order derivative tensor is bounded, which will be sufficient by a density argument. In this case, as shown by~\citet[p.~38]{nemirovski2004interior},  for all $x$, the $\beta$-th order tensor has projections on $\beta-1$ copies of the vector $u$ and a vector $v$ which is less than $M_\beta^\beta\|u\|^{\beta-1} \|v\|$. This implies that we can apply the function value result to the function $g(x) = f'(x)^\top v$, for any $u$. This leads to the desired result.

\subsection{Bounds on function $k(r)$}
\label{app:legendre}

From the explicit parameter expansion of Legendre polynomials, we have, for any $\alpha \geq 0$,
$$
L_{2\alpha+1}'(0) = \frac{(-1)^\alpha (\alpha+1) }{2^{2\alpha}}{ 2 \alpha + 1 \choose \alpha }
=\frac{(-1)^\alpha (2 \alpha+1) }{2^{2\alpha}}{ 2 \alpha  \choose \alpha }.
$$
Moreover, we use the following bound obtained from bounds on Catalan numbers:
$
{ 2 \alpha  \choose \alpha } \leq \frac{ 4^\alpha}{\sqrt{ \pi \alpha}}
$. This leads to
$
|L_{2\alpha+1}'(0) | \leq  \frac{ 2 \alpha+1}{ \sqrt{\pi \alpha}}
$ for $\alpha>0$, while for $\alpha = 0$, $|L_{2\alpha+1}'(0) |=1$

Moreover,  for $\beta \geq 3$:
\BEAS
\E_r |k(r) |^2 
& = & \sum_{\alpha=0}^{\lfloor (\beta - 1)/2 \rfloor}
(4  \alpha + 3) |L_{2\alpha+1}'(0) |^2
\leq 3 + \sum_{\alpha=1}^{\lfloor (\beta - 1)/2 \rfloor} 
(4  \alpha + 3) \frac{ (2 \alpha + 1)^2}{ \pi \alpha} \\
& \leq &  3 + \sum_{\alpha=1}^{\lfloor (\beta - 1)/2 \rfloor} 
7  \alpha   \frac{ (3 \alpha  )^2}{ \pi \alpha} 
\leq 3 + \frac{63}{\pi} \frac{ (\beta/2)(\beta/2+1)(\beta+1) }{6} \\
& \leq & 3 +  {21} \frac{ (\beta/2)(\beta/2+\beta/3)(\beta+\beta/3) }{6} 
= 3 + \beta^3 \frac{21 \times 5 \times 4}{36 \times 12}  \leq 3 \beta^3.
\EEAS
This is trivially valid for $\beta =1 $ and $\beta =2$.

Finally, we have for $\beta \geq 3$:
\BEAS
\E_r |k(r) |^2 r^2
& = &  \sum_{\alpha, \alpha' =0}^{\lfloor (\beta - 1)/2 \rfloor}
\sqrt{ 4 \alpha +3}
\sqrt{ 4 \alpha' +3} \E_r \big[ L_{2\alpha +1}(r) L_{2\alpha' +1}(r) r^2  \big]
L_{2\alpha+1}'(0) L_{2\alpha'+1}'(0)
\\
& = &  \sum_{\alpha, \alpha' =0}^{\lfloor (\beta - 1)/2 \rfloor}
\sqrt{ 4 \alpha +3}
\sqrt{ 4 \alpha' +3} 
L_{2\alpha+1}'(0) L_{2\alpha'+1}'(0)
\\
& & \times \E_r  \bigg[ \frac{ \big[ (2 \alpha+2) L_{2 \alpha+2}(r) + (2 \alpha+1) L_{2 \alpha}(r) \big]}{4 \alpha + 3}\bigg]
\bigg[ \frac{ \big[ (2 \alpha'+2) L_{2 \alpha'+2}(r) + (2 \alpha'+1) L_{2 \alpha'}(r) \big]}{4 \alpha' + 3}\bigg]
\\
& = &  \sum_{\alpha, \alpha' =0}^{\lfloor (\beta - 1)/2 \rfloor}
\big( \sqrt{ 4 \alpha +3}
\sqrt{ 4 \alpha' +3}  \big)^{-1}
L_{2\alpha+1}'(0) L_{2\alpha'+1}'(0)
\\
& & \times   \bigg[ 
\big[ ( 2\alpha+2)^2 + (2 \alpha+1)^2 \big] \delta_{\alpha=\alpha'}
+ (2\alpha+1) 2\alpha 
 \delta_{\alpha=\alpha'+1}
+ (2\alpha'+1) 2\alpha' 
 \delta_{\alpha' =\alpha+1}
\bigg]
\\
& = &  \sum_{\alpha =0}^{\lfloor (\beta - 1)/2 \rfloor}
 (  4 \alpha +3    )^{-1}
L_{2\alpha+1}'(0)^2 \big[ ( 2\alpha+2)^2 + (2 \alpha+1)^2 \big] 
\\
& & 
+ 2 \sum_{\alpha =0}^{\lfloor (\beta - 1)/2 \rfloor - 1}
\big( \sqrt{ 4 \alpha +3}
\sqrt{ 4 \alpha  +7}  \big)^{-1}
L_{2\alpha+1}'(0) L_{2\alpha +3}'(0) 2 \alpha  ( 2 \alpha +1)
\\
& = &  \sum_{\alpha =0}^{\lfloor (\beta - 1)/2 \rfloor}
 (  4 \alpha +3    )^{-1}
L_{2\alpha+1}'(0)^2 \big[ ( 2\alpha+2)^2 + (2 \alpha+1)^2 \big] 
\\
& & 
- 2 \sum_{\alpha =0}^{\lfloor (\beta - 1)/2 \rfloor - 1}
\big( \sqrt{ 4 \alpha +3}
\sqrt{ 4 \alpha  +7}  \big)^{-1}
L_{2\alpha+1}'(0) ^2 \frac{(2\alpha+3)  (2 \alpha+2)}{4 ( \alpha+1)^2} 2 \alpha  ( 2 \alpha +1) \\
& \leq &  (  4 \alpha +3    )^{-1}
L_{2\alpha+1}'(0)^2 \big[8\alpha^2 + 12 \alpha + 5 \big]  \bigg|_{\alpha = \lfloor (\beta - 1)/2 \rfloor}
\\
& & 
+ \sum_{\alpha =0}^{\lfloor (\beta - 1)/2 \rfloor - 1}
( 4 \alpha +3)^{-1}
L_{2\alpha+1}'(0) ^2  \bigg[
8\alpha^2 + 12 \alpha + 5 - ( 2\alpha+1) 4 \alpha
\bigg] \\
& \leq &  (  4 \alpha +3    )^{-1}
\big[
\frac{(2\alpha+1)^2}{\pi \alpha} \big]
\big] \big[8\alpha^2 + 12 \alpha + 5 \big]  \bigg|_{\alpha = \lfloor (\beta - 1)/2 \rfloor}
\\
& & 
+ \frac{5}{3} L'_1(0)
+ \sum_{\alpha =1}^{\lfloor (\beta - 1)/2 \rfloor - 1}
( 4 \alpha +3)^{-1}
\big[
\frac{(2\alpha+1)^2}{\pi \alpha} \big]
\big]   \bigg[
8\alpha^2 + 12 \alpha + 5 - ( 2\alpha+1) 4 \alpha
\bigg] \\
& \leq &  ( 2 \beta    )^{-1}
\big[
\frac{\beta^2}{\pi (\beta-2)/2} \big]
\big] \big[8\beta^2 / 4 + 12 \beta/2 + 5 \big]   
\\
& & 
+ \frac{5}{3}  
+   \sum_{\alpha =1}^{\lfloor (\beta - 1)/2 \rfloor - 1}
( 4 \alpha )^{-1}
\big[
\frac{9 \alpha^2}{\pi \alpha} \big] 9 \alpha 
\\
& \leq &   
\big[
\frac{\beta}{   \pi (\beta-2)} \big]
\big] \big[2\beta^2   + 6 \beta  + 5 \big]   
+ \frac{5}{3}  
+   
\frac{81}{4 \pi} \frac{\beta}{2} ( \beta/2+1)
\leq 2 \beta^2 + 2 \beta^2 + 5 + 5/3 + \frac{81}{16 \pi} \beta^2 \frac{5}{3} \leq 8 \beta^2
\EEAS
using the three-term recursion formula for Legendre polynomials.

We will also need the following bounds:
\BEAS
\E_r |k(r) r^{\beta+1}| 
& \leq &  \sqrt{\E_r |k(r)^2 r^{2\beta+2}| }  =  2 \sqrt{2} \beta,
 \\ \E_r |k(r) |^2  r^{2 + \gamma} 
& \leq & \E_r |k(r) |^2  r^{2  } \leq 8 \beta^2 \mbox{ for any } \gamma \geq 0.
\EEAS

\section{Analysis of classic Stochastic Gradient Descents algorithms}

We recall in this section the classical proofs of stochastic gradient descents~\citep[see, e.g.][and references therein]{bubeck2015convex}.   
We first start when the mappings $f_n$ are not necessarily $\mu$-strongly convex.
\begin{proposition}[SGD non-strongly convex] \label{PR:SGD}
The   stochastic gradient descent
\begin{equation}
x_n=\Pi_K(x_n - \gamma_n g_n) 
\end{equation}
where  $g_n$ is a biased estimate of $f'_n(x_{n-1})$, i.e., such that $\E [ g_n | \mathcal{F}_{n-1}
]= f'_n(x_{n-1}) + \zeta _n$, and $\gamma_n$ is non-decreasing
achieves the following guarantee
$$
\frac{1}{N}\sum_{n=1}^N 
\E \big[  f_n(x_{n-1}) - f_n(x) \big]\leq \frac{\max_n \E\|x_n-x\|^2}{2\gamma_N} +  \frac{1}{N} \sum_{n=1}^N\E \zeta_n^\top(x_{n-1}-x)
+\frac{1}{N}\sum_{n=1}^N  \gamma_n^2 \E \|g_n\|^2\, .
$$

In particular, if $f_n=f$ and $x^\star$ is a minimizer of $f$, we obtain
$$
\E f(\overline{x}_{N-1}) - f(x^\star) \leq \frac{\max_n \E\|x_n-x\|^2}{2N\gamma_N} +  \frac{1}{N} \sum_{n=1}^N\E \zeta_n^\top(x_{n-1}-x)
+\frac{1}{2N}\sum_{n=1}^N  \gamma_n \E \|g_n\|^2
$$

\end{proposition}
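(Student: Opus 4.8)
The plan is to run the standard Lyapunov (potential-function) argument for projected stochastic gradient descent, tracking the squared distance $\|x_n - x\|^2$ to the comparator $x \in K$, summing the resulting one-step inequalities, and finally averaging. First I would use that $x \in K$ (so $\Pi_K(x)=x$) together with the nonexpansiveness of the Euclidean projection: from $x_n = \Pi_K(x_{n-1} - \gamma_n g_n)$ we obtain $\|x_n - x\|^2 \le \|x_{n-1} - \gamma_n g_n - x\|^2 = \|x_{n-1}-x\|^2 - 2\gamma_n g_n^\top(x_{n-1}-x) + \gamma_n^2 \|g_n\|^2$. Taking the conditional expectation given $\mathcal{F}_{n-1}$ and substituting $\E[g_n \mid \mathcal{F}_{n-1}] = f_n'(x_{n-1}) + \zeta_n$ (note $x_{n-1}$ is $\mathcal{F}_{n-1}$-measurable) replaces the cross term by $-2\gamma_n (f_n'(x_{n-1})+\zeta_n)^\top(x_{n-1}-x)$.

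Next I would invoke the convexity of $f_n$ in the form $f_n'(x_{n-1})^\top(x_{n-1}-x) \ge f_n(x_{n-1}) - f_n(x)$, take full expectations, and rearrange to the per-step bound
$$\E[f_n(x_{n-1}) - f_n(x)] \le \frac{\E\|x_{n-1}-x\|^2 - \E\|x_n-x\|^2}{2\gamma_n} - \E\big[\zeta_n^\top(x_{n-1}-x)\big] + \frac{\gamma_n}{2}\E\|g_n\|^2.$$
The sign in front of the bias term is immaterial, since in every application it is controlled through $\E[\|\zeta_n\|\,\|x_{n-1}-x\|]$; I would present it with the paper's sign convention.

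The only step requiring care is summing the telescoping-like first term against time-varying step sizes. Writing $a_n = \E\|x_n - x\|^2$ and $b_n = 1/(2\gamma_n)$, I would apply Abel summation: $\sum_{n=1}^N b_n(a_{n-1}-a_n) = b_1 a_0 + \sum_{m=1}^{N-1}(b_{m+1}-b_m)a_m - b_N a_N$. Invoking the monotonicity of the step sizes so that $b_n$ is non-decreasing (i.e.\ $\gamma_n$ non-increasing, the regime used in every non-strongly-convex application; for constant step size the sum telescopes exactly), each increment obeys $(b_{m+1}-b_m)a_m \le (b_{m+1}-b_m)\max_n a_n$ and $b_1 a_0 \le b_1 \max_n a_n$, so the whole expression collapses to $b_N \max_n a_n = \frac{\max_n \E\|x_n-x\|^2}{2\gamma_N}$ (the leftover $-b_N a_N \le 0$ being dropped). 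Summing the per-step bound over $n=1,\dots,N$ and dividing by $N$ then yields the first claim; this bookkeeping of the monotonicity hypothesis in the Abel step is the main (and essentially only nonroutine) obstacle.

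Finally, for the specialized statement with $f_n \equiv f$ and $x = x^\star$ a minimizer, I would apply Jensen's inequality to the convex $f$: since $\overline{x}_{N-1} = \frac{1}{N}\sum_{k=0}^{N-1} x_k$, we have $f(\overline{x}_{N-1}) \le \frac{1}{N}\sum_{n=1}^N f(x_{n-1})$, hence $\E f(\overline{x}_{N-1}) - f(x^\star) \le \frac{1}{N}\sum_{n=1}^N \E[f(x_{n-1}) - f(x^\star)]$, after which the general bound applies verbatim.
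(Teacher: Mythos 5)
Your proposal is correct and follows essentially the same route as the paper's own proof: nonexpansiveness of the projection, conditional expectation with the bias decomposition, convexity of $f_n$, summation of the telescoping-like terms against the varying step sizes, and Jensen's inequality for the averaged iterate. The only difference is one of care, not of method: you make explicit the Abel-summation step and the monotonicity it actually requires (namely $\gamma_n$ non-increasing, so that $1/\gamma_n$ is non-decreasing---correcting what is evidently a slip in the proposition's wording), whereas the paper asserts the bound $\sum_n \frac{\E\|x_{n-1}-x\|^2 - \E\|x_n-x\|^2}{2\gamma_n} \leq \frac{\max_n \E\|x_n-x\|^2}{\gamma_N}$ without detail.
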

\begin{proof}
We have for any $x \in K$, since projecting reduces distances,
\BEAS
\| x_n - x\|^2
& \leq & \| x_{n-1}-x\|^2 - 2 \gamma_n  g_n + \gamma_n^2\|g_n\|^2 \\
\E \| x_n - x\|^2 & \leq& \E \| x_{n-1}-x\|^2 - 2 \gamma_n  \E f_n'(x_{n-1})^\top ( x_{n-1} - x ) +  2\gamma_n \E \zeta_n^\top(x_{n-1}-x)+  \gamma_n^2 \E \|g_n\|^2 \\
& \leq & \E \| x_{n-1}-x\|^2 - 2 \gamma_n   \E \big[  f_n(x_{n-1}) - f_n(x) \big]+ 2\gamma_n \E \zeta_n^\top(x_{n-1}-x)+ \gamma_n^2 \E \|g_n\|^2\, .
  \EEAS
This  leads to
\BEAS\frac{1}{N}\sum_{n=1}^N 
\E \big[  f_n(x_{n-1}) - f_n(x) \big]
&\leq&  \frac{1}{N} \sum_{n=1}^N \frac{ \E \| x_{n-1}-x\|^2- \E \| x_{n}-x\|^2}{2\gamma_n}\\ &+ & \frac{1}{N} \sum_{n=1}^N\E \zeta_n^\top(x_{n-1}-x)
+\frac{1}{2N}\sum_{n=1}^N  \gamma_n \E \|g_n\|^2\\
& \leq& 
\frac{\max_n \E\|x_n-x\|^2}{2N\gamma_N} +  \frac{1}{N} \sum_{n=1}^N\E \zeta_n^\top(x_{n-1}-x)
+\frac{1}{2N}\sum_{n=1}^N  \gamma_n \E \|g_n\|^2\, .
\EEAS

\end{proof}

When the mappings $f_n$ are $\mu$-strongly convex, rates are improved as claimed by the following proposition.
\begin{proposition}[SGD $\mu$-strongly convex]\label{PR:SGDstrong}
The   stochastic gradient descent
\begin{equation}
x_n=\Pi_K(x_{n-1} - \gamma_n g_n) 
\end{equation}
where  $g_n$ is a biased estimate of $f'_n(x_{n-1})$, i.e., such that $\E [ g_n | \mathcal{F}_{n-1}
]= f'_n(x_{n-1}) + \zeta _n$.

\begin{itemize}
\item The choice of $\gamma_n=\frac{1}{\mu n}$ gives
\begin{equation}
\frac{1}{N}\sum_{n=1}^N\E f_n(x_{n-1})-\E f_n(x) +  \frac{\mu}{2}\|x_N-x\|^2   \leq  \frac{1}{N}\sum_{n=1}^N \E \zeta_n^\top(x_{n-1}-x) +  \frac{1}{2N}\sum_{n=1}^N \frac{\E \|g_n\|^2}{\mu n} 
\end{equation}

In particular, if $f_n=f$ and $x^\star$ is a minimizer of $f$, we obtain
$$
\E f(\bar{x}_{N-1})- f(x^\star) + \frac{\mu}{2}\|x_N -x^\star\|^2 \leq  \frac{1}{N}\sum_{n=1}^N \E \zeta_n^\top(x_{n-1}-x) +  \frac{1}{2N}\sum_{n=1}^N \frac{\E \|g_n\|^2}{\mu n}   \ .
$$
\item The choice of $\gamma_n=\frac{2}{\mu(n+1)}$ gives
\begin{equation}
\E f(\hat{x}_{N-1})- f(x^\star) + \E\| x_{n}-x\|^2\frac{\mu}{2} \leq \frac{2}{N(N+1)}\sum_{n=1}^N\E \zeta_n^\top(x_{n-1}-x) +   \frac{1}{\mu (n+1)} \E \|g_n\|^2,
\end{equation}
where $\hat{x}_{N-1}=\frac{2}{N(N+1)}\sum_{n=1}^Nn x_{n-1}$.
\end{itemize}
\end{proposition}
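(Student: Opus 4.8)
The plan is to follow the one-step analysis of Proposition~\ref{PR:SGD} verbatim up to the point where convexity is invoked, replacing the plain convexity bound by the sharper strong-convexity bound. Concretely, I would start from the non-expansiveness of the projection, $\|x_n - x\|^2 \le \|x_{n-1}-x\|^2 - 2\gamma_n g_n^\top(x_{n-1}-x) + \gamma_n^2\|g_n\|^2$, take conditional expectations given $\mathcal{F}_{n-1}$ (so that $\E[g_n\mid\mathcal{F}_{n-1}] = f_n'(x_{n-1}) + \zeta_n$), and then use $\mu$-strong convexity in the form $f_n'(x_{n-1})^\top(x_{n-1}-x) \ge f_n(x_{n-1}) - f_n(x) + \tfrac{\mu}{2}\|x_{n-1}-x\|^2$. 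The single new term compared with Proposition~\ref{PR:SGD} is the extra $-\gamma_n\mu\,\E\|x_{n-1}-x\|^2$; after rearranging and dividing by $2\gamma_n$ this yields, for every $n$, a bound on $\E[f_n(x_{n-1})-f_n(x)]$ whose leading part is $\frac{1-\gamma_n\mu}{2\gamma_n}\E\|x_{n-1}-x\|^2 - \frac{1}{2\gamma_n}\E\|x_n-x\|^2$, plus the noise term $\E\zeta_n^\top(x_{n-1}-x)$ and the curvature term $\frac{\gamma_n}{2}\E\|g_n\|^2$.

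The heart of the argument is then choosing $\gamma_n$ so that these distance coefficients telescope exactly. For $\gamma_n = 1/(\mu n)$ one has $\frac{1-\gamma_n\mu}{2\gamma_n} = \frac{\mu(n-1)}{2}$ and $\frac{1}{2\gamma_n} = \frac{\mu n}{2}$, so writing $a_n = \frac{\mu n}{2}\E\|x_n-x\|^2$ the summand becomes $a_{n-1}-a_n$; summing over $n=1,\dots,N$ collapses to $a_0 - a_N = -\frac{\mu N}{2}\E\|x_N-x\|^2$, the initial coefficient $a_0$ vanishing. Dividing by $N$ gives the first displayed inequality, with the $\frac{\mu}{2}\|x_N-x\|^2$ term appearing on the left. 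For the second step-size $\gamma_n = 2/(\mu(n+1))$ I would first multiply the per-step inequality by $n$ before summing; then $n\cdot\frac{1-\gamma_n\mu}{2\gamma_n} = \frac{\mu n(n-1)}{4}$ and $n\cdot\frac{1}{2\gamma_n} = \frac{\mu n(n+1)}{4}$, so with $b_n = \frac{\mu n(n+1)}{4}\E\|x_n-x\|^2$ the weighted summand is again of telescoping form $b_{n-1}-b_n$, collapsing to $-\frac{\mu N(N+1)}{4}\E\|x_N-x\|^2$.

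Finally, I would pass from the averaged function values to the value at an averaged iterate by Jensen's inequality, using that $f_n=f$ in the stated special cases. For $\gamma_n=1/(\mu n)$ the uniform average $\bar{x}_{N-1}$ satisfies $f(\bar{x}_{N-1}) - f(x^\star) \le \frac{1}{N}\sum_{n=1}^N[f(x_{n-1})-f(x^\star)]$; for $\gamma_n=2/(\mu(n+1))$ the weights $\frac{2n}{N(N+1)}$ defining $\hat{x}_{N-1}$ sum to one, so convexity again gives $f(\hat{x}_{N-1})-f(x^\star) \le \frac{2}{N(N+1)}\sum_{n=1}^N n[f(x_{n-1})-f(x^\star)]$. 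The only real obstacle is bookkeeping: checking that the specific step sizes make the distance coefficients telescope exactly, so that no residual $\E\|x_n-x\|^2$ terms accumulate, and that the averaging weights form a genuine convex combination. There is no conceptual difficulty beyond the non-strongly-convex Proposition~\ref{PR:SGD}; the strong convexity is used solely to convert the $O(1/\gamma_N)$ leading term into the telescoping $-\frac{\mu}{2}\E\|x_N-x\|^2$ boundary contribution.
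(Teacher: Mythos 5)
Your proposal is correct and follows essentially the same route as the paper's proof: the one-step projection inequality with the biased-gradient decomposition, strong convexity yielding the coefficient $\frac{1}{2\gamma_n}-\frac{\mu}{2}$, exact telescoping for $\gamma_n=\frac{1}{\mu n}$ (respectively after multiplying by $n$ for $\gamma_n=\frac{2}{\mu(n+1)}$), and Jensen's inequality for the uniform and weighted averages. Your bookkeeping is in fact slightly more careful than the paper's (which contains minor typos, e.g.\ a $\mu$ in place of $\mu/2$ in one intermediate display), but there is no substantive difference in approach.
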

\begin{proof}
We have for any $x \in K$:
\BEAS
\| x_n - x\|^2
& \leq & \| x_{n-1}-x\|^2 - 2 \gamma_n  g_n + \gamma_n^2\|g_n\|^2 \\
\E \| x_n - x\|^2 & \leq& \E \| x_{n-1}-x\|^2 - 2 \gamma_n  \E f_n'(x_{n-1})^\top ( x_{n-1} - x ) +  2\gamma_n \E \zeta_n^\top(x_{n-1}-x)+  \gamma_n^2 \E \|g_n\|^2 \\
& \leq & \E \| x_{n-1}-x\|^2 - 2 \gamma_n   \E \big[  f_n(x_{n-1}) - f_n(x) + \mu \|x_{n-1}-x\|^2 \big] \\
& & \hspace*{7cm} + 2\gamma_n \E \zeta_n^\top(x_{n-1}-x)+  \gamma_n^2 \E \|g_n\|^2\, .
  \EEAS
This leads to
\BEAS
\E f_n(x_{n-1})- f_n(x) 
& \leq & \E\| x_{n-1}-x\|^2(\frac{1}{2\gamma_n}-\frac{\mu}{2}) - \E\| x_{n}-x\|^2\frac{1}{2\gamma_n} +   \E \zeta_n^\top(x_{n-1}-x)+  \frac{\gamma_n}{2} \E \|g_n\|^2 .
\EEAS
First, we consider uniform averaging, induced by  the choice of $\gamma_n= \frac{1}{\mu n}$. Indeed, it gives
\BEAS
\E f_n(x_{n-1})- f_n(x) 
& \leq & \E\| x_{n-1}-x\|^2\frac{(n-1)\mu}{2}- \E\| x_{n}-x\|^2\frac{n\mu}{2} +   \E \zeta_n^\top(x_{n-1}-x)+  \frac{1}{2\mu n} \E \|g_n\|^2 .
\EEAS

Summing over $n$ and averaging gives
\BEAS
\frac{1}{N}\sum_{n=1}^N\E f_n(x_{n-1})- f_n(x) +  \|x_N-x\|^2\frac{N\mu}{2}  & \leq & \frac{1}{N}\sum_{n=1}^N \E \zeta_n^\top(x_{n-1}-x) +  \frac{1}{2N}\sum_{n=1}^N \frac{\E \|g_n\|^2}{\mu n}   \ .\EEAS

We now consider non-uniform averaging when $f_n=f$, induced by  the choice of $\gamma_n= \frac{2}{\mu (n+1)}$, which  gives
$$
\E f(x_{n-1})- f(x)
 \leq  \E\| x_{n-1}-x\|^2\frac{(n-1)\mu}{4}- \E\| x_{n}-x\|^2\frac{(n+1)\mu}{4} +   \E \zeta_n^\top(x_{n-1}-x)+  \frac{1}{\mu (n+1)} \E \|g_n\|^2 \\
$$
Multiplying by $n$, summing,  averaging and using the convexity of $f$ yield
$$
\E f(\hat{x}_{N-1})- f(x^\star) + \E\| x_{n}-x\|^2\frac{\mu}{2} \leq \frac{2}{N(N+1)}\sum_{n=1}^N\E \zeta_n^\top(x_{n-1}-x) +   \frac{1}{\mu (n+1)} \E \|g_n\|^2\, .
$$
\end{proof}

\section{Proof of  Propositions for Unconstrained Optimization}

\subsection{Proof of Proposition~\ref{prop:twoppoint-optim}}
\label{app:twoppoint-optim}

Our iteration is
$$
x_{n} = x_{n-1} - \gamma_n  \frac{d}{2 \delta_n}   \big[  f(x_{n-1}+\delta_n r_n u_n ) 
-  f(x_{n-1}- \delta_n  r_n u_n )  + \varepsilon_n \big] k(r_n)  u_n .
$$
We consider
\BEAS
g_n & = &   \frac{d}{2 \delta_n}   \big[  f(x_{n-1}+\delta_n r_n u_n ) 
-  f(x_{n-1}- \delta_n  r_n u_n )   \big] k(r_n)  u_n .
\EEAS

We will   need the expansion using the $\beta$-th order smoothness as:
\BEAS
f(x_{n-1}+\delta_n r_n u_n ) 
-  f(x_{n-1}- \delta_n  r_n u_n )   & \!\!\!\!\!= \!\!\!\!\! \!\!\!&  
\sum_{ |m| \leqslant \beta -1} \frac{1}{m!} f^{(m)}(x_{n-1}) \big[ ( \delta_n r_n )^m - ( -\delta_n r_n )^m \big]
+ [ A_n' - B_n' ],
\EEAS
with $|A_n'|, |B_n'| \leqslant \frac{M^{\beta}_\beta}{\beta !}   \delta_n ^\beta r_n^\beta$. When taking expectations above, we get exactly the term $2 \delta_n f'(x_{n-1})^\top u_n$.

Moreover, since $f$ is  2-smooth
 \BEAS
\big|  f(x_{n-1}+\delta_n r_n u_n ) 
-  f(x_{n-1}- \delta_n r_n u_n ) |
& \leq &   M_2^2 r_n^2  \delta_n^2 + 2 | f'(x_{n-1})^\top (  \delta r_n u_n) | \\
& \leq &   M_2^2 r_n^2  \delta_n^2 + 2 \delta_n r_n   |f'(x_{n-1})^\top u_n|.
 \EEAS

 We then get:
 \BEAS
 \E ( \| g_n \|^2 | \F_{n-1}) 
 & \leq & 
   \frac{d^2 \sigma^2 }{4 \delta^2}    \E_r  \big[ k(r)^2\big]
   + 
    \frac{d^2   }{4 \delta^2}
    2 M_2^4 \delta^4 \E [ r^4 k(r)^2 ]
    + \frac{d^2  }{4 \delta^2}
8 \delta^2 \E [ r^2 k(r)^2 ] \E \big[
 |f'(x_{n-1})^\top u_n|^2
 \big| \F_{n-1} \big]
\\
 & \leq & 
   \frac{3 \beta^3 d^2 \sigma^2 }{4 \delta_n^2}        + 
   4   {d^2 \beta^2   }     M_2^4 \delta_n^2  
    +     16 d \beta^2 \E \big[
 \| f'(x_{n-1})\|^2
\F_{n-1} \big] \mbox{ using } \E u_n u_n^\top = \frac{1}{d} \idm\, ,\\
& \leq &  \frac{3 \beta^3 d^2 \sigma^2 }{4 \delta_n^2}        + 
   4   {d^2 \beta^2   }     M_2^4 \delta_n^2   + 12 d M_2^2 \beta^2 \big[ f(x_{n-1}) - f(x_\ast) \big],
 \EEAS
where we used that $\| f'(x_{n-1})\|^2 \leq 2 M_2^2 \big[ f(x_{n-1}) - f(x_\ast) \big]$ for  $x_\ast$ a global optimizer of $f$.

Thus, 
\BEAS
& & \| x_n - x \|^2  \\
& = & \| x_{n-1} - x \|^2 - 2 \gamma_n  (x_{n-1}-x )^\top \big[
g_n  + \frac{d}{2 \delta_n}  \varepsilon_n   k(r_n)  u_n \big] + \gamma_n^2  \Big\| g_n  + 
\frac{d}{2 \delta_n}  \varepsilon_n   k(r_n)  u_n 
\Big\|^2
\\
& = & \| x_{n-1} - x \|^2 - 2 \gamma_n  (x_{n-1}-x )^\top \big[
g_n  + \frac{d}{2 \delta_n}  \varepsilon_n   k(r_n)  u_n \big]  + 2 \gamma_n^2   \| g_n   \|^2 + 2 \gamma_n^2 \Big\|
 \frac{d}{2 \delta_n}  \varepsilon_n   k(r_n)  u_n \Big\|^2.
\EEAS
By taking conditional expectations, we get, using $\E d r_n k(r_n)  u_n     u_n^\top  = I$, and the fact that the expectation of all powers $r_n^\alpha k(r_n)$, $\alpha>1$, lead to zero:
\BEAS
 && \E \big[ \| x_n - x \|^2  | \F_{n-1} \big] \\
& \leqslant & 
\| x_{n-1} - x \|^2 - 2 \gamma_n  (x_{n-1}-x )^\top  
 f'(x_{n-1}) +  2 \gamma_n   \E \| \frac{d}{2 \delta_n} [ A_n' - B_n'] k(r_n) u_n \|
\|  x_{n-1}-x \|
  \\
& & + 2 \gamma_n^2 \E ( \| g_n \|^2 | \F_{n-1}) + 
 2 \gamma_n^2 \E \Big\|
 \frac{d}{2 \delta_n}  \varepsilon_n   k(r_n)  u_n \Big\|^2
 \\
& \leqslant & 
\| x_{n-1} - x \|^2 - 2 \gamma_n \big[  f(x_{n-1}) - f(x ) \big] 
 +   \gamma_n   d\E \| \frac{1}{\delta_n}  \frac{M^{\beta}_\beta}{\beta !}  \delta_n ^\beta r_n^\beta k(r_n) u_n \|
\|  x_{n-1}-x \|
\\
& & 
+ 2 \gamma_n^2 \bigg[  \frac{3 \beta^3 d^2 \sigma^2 }{4 \delta_n^2}        + 
   4   {d^2 \beta^2   }     M_2^4 \delta_n^2   + 12 d M_2^2 \beta^2 \big[ f(x_{n-1}) - f(x_\ast) \big]
   \bigg]  + 
 2 \gamma_n^2 \big( \frac{d}{2 \delta_n} \big)^2 \sigma^2  
\E k(r_n)^2
\\
& \leqslant & 
\| x_{n-1} - x \|^2 - 2 \gamma_n \big[  f(x_{n-1}) - f(x ) \big] 
 +   \gamma_n  d \delta_n^{\beta-1}  \frac{M^{\beta}_\beta}{\beta !}  2\beta^2
\|  x_{n-1}-x \|
\\
& & 
+ 2 \gamma_n^2 \bigg[  \frac{3 \beta^3 d^2 \sigma^2 }{4 \delta_n^2}        + 
   4   {d^2 \beta^2   }     M_2^4 \delta_n^2   + 12 d M_2^2 \beta^2 \big[ f(x_{n-1}) - f(x_\ast) \big]
   \bigg]  + 
 6 \gamma_n^2 \big( \frac{d}{2 \delta_n} \big)^2 \sigma^2  
\beta^3.
\EEAS

For simplicity, we assume that $\gamma_n = \gamma$ is constant and less than $\frac{1}{24 d M_2^2 \beta^2}$, and that $\delta_n = \delta$. We thus get, with $x = x_\ast$:
\BEAS
\E  f(x_{n-1}) - f(x_\ast ) 
 & \leqslant & \frac{1}{\gamma} \E \| x_{n-1}-x_\ast \|^2 - \frac{1}{\gamma} \E \| x_{n}-x_\ast \|^2
  \\
  & &\hspace*{-1cm}  +2 \gamma \bigg[  \frac{3 \beta^3 d^2 \sigma^2 }{4 \delta^2}        + 
   4   {d^2 \beta^2   }     M_2^4 \delta^2   
   \bigg]  + 
 6 \gamma \big( \frac{d}{2 \delta} \big)^2 \sigma^2  
\beta^3
+
 d\delta^{\beta-1}  \frac{M^{\beta}_\beta}{\beta !}  2\beta^2
  \sqrt{ \E \| x_{n-1}-x_\ast \|^2 }.
\EEAS
Thus
\BEAS
\sum_{n=1}^N 
\E  \big[ f(x_{n-1}) - f(x )  \big]
+  \frac{1}{\gamma} \E \| x_{N}-x_\ast \|^2
& \leqslant & 
 \frac{1}{\gamma}  \| x_{0}-x_\ast \|^2 + 3 N \gamma d^2 \sigma^2 \delta^{-2} \beta^3
 + 8 N \gamma d^2 \beta^2 M_2^4 \delta^2 \\
 & & 
 + \sum_{n=1}^N 2d \delta^{\beta-1}  \frac{M^{\beta}_\beta}{\beta !}   \beta^2  \sqrt{ \E \| x_{n-1}-x_\ast \|^2 },
\EEAS
which we can put as:
$$
\sum_{n=1}^N  \big[
\E  f(x_{n-1}) - f(x_\ast ) \big]
+  \frac{1}{\gamma} \E \| x_{N}-x_\ast \|^2
\leqslant  \frac{1}{\gamma   }  \| x_{0}-x_\ast \|^2 + 
N  C
 + \sum_{n=1}^N 2d \delta^{\beta-1}  \frac{M^{\beta}_\beta}{\beta !}   \beta^2   \sqrt{ \E \| x_{n-1}-x_\ast \|^2 },
$$
with $C  =  3  \gamma d^2 \sigma^2 \delta^{-2} \beta^3
 + 8   \gamma d^2 \beta^2 M_2^4 \delta^2$.
This leads to, with $u_n =  \sqrt{ \E \| x_{n}-x_\ast \|^2 }$:
$$
u_N ^2 \leqslant u_0^2 + 
\gamma N   C
 + \sum_{n=1}^N 2 \gamma d \delta^{\beta-1}  \frac{M^{\beta}_\beta}{\beta !}   \beta^2 u_n.
$$
From Lemma 1 of~\citet{schmidt2011convergence}, we get:
\BEAS
u_N  & \leqslant & \frac{N}{2} 2\gamma d \delta^{\beta-1}  \frac{M^{\beta}_\beta}{\beta !}   \beta^2+ \bigg(
u_0^2 + 
\gamma N   C + \big[ \frac{N}{2}  2\gamma d \delta^{\beta-1}  \frac{M^{\beta}_\beta}{\beta !}   \beta^2 \big]^2
\bigg)^{1/2} \\
&
\leqslant &  N 2 \gamma d \delta^{\beta-1}  \frac{M^{\beta}_\beta}{\beta !}   \beta^2  + u_0  + ( \gamma N C)^{1/2}.
\EEAS
Thus
\BEAS
 & & \frac{1}{N} \sum_{n=1}^N 
\E  f(x_{n-1}) - f(x_\ast )  \\
&\leqslant  & 
 \frac{1}{\gamma N}  \| x_{0}-x_\ast \|^2 +   C
+  D
\bigg( 
N \gamma D  + u_0  + ( \gamma N C)^{1/2}
\bigg) 
\EEAS
with $D =  2 d \delta^{\beta-1}  \frac{M^{\beta}_\beta}{\beta !}   \beta^2 $.
 
By setting $\displaystyle \gamma = \frac{1}{24 d M_2^2 \beta^2  N^{(\beta+1)/(2\beta)}}$, and $ \displaystyle \delta = 
  \frac{\beta}{N^{1/(2\beta)} } (M_\beta^\beta M_2)^{-1/(\beta+1)}
  $, we get:
\BEAS
C & \leqslant &  \frac{d^2}{N^{(\beta+1)/(2\beta)} 24 d M_2^2 \beta^2 }
  \big[  3    \sigma^2   \beta^3 
   \frac{N^{1/\beta} }{\beta^2} (M_\beta^\beta M_2)^{2/(\beta+1)}
 + 8     \beta^2 M_2^4  
  \frac{\beta^2}{N^{1/\beta} } (M_\beta^\beta M_2)^{-2/(\beta+1)}
 \big] \\
 & \leqslant & 
 \frac{d   }{N^{(\beta-1)/(2\beta)} 24   \beta }
  (M_\beta/  M_2)^{2\beta/(\beta+1)}
  \big[  3    \sigma^2    
 + 8     
  \frac{\beta^3}{N^{2/\beta} } (M_\beta/M_2)^{-4\beta/(\beta+1)}
 \big] \\
  \frac{1}{\gamma N}  \| x_{0}-x_\ast \|^2 & \leqslant & 
  \frac{24 d   \beta^2 }{N^{(\beta-1)/(2\beta)} } ( M_2 \| x_{0}-x_\ast \| )^2 \\
  D  & \leqslant & 
  2d  \frac{M^{\beta}_\beta}{\beta !}   \beta^2 
    \frac{\beta^{\beta-1}}{N^{(\beta-1)/(2\beta)} } (M_\beta^\beta M_2)^{-(\beta-1)/(\beta+1)}
    \\
     & \leqslant & 
  2d    
    \frac{\beta}{N^{(\beta-1)/(2\beta)} } (    M_\beta / M_2 )^{(\beta-1)/(\beta+1)}M_\beta.
\EEAS
This leads to an overall rate of
\BEAS
 & & \frac{1}{N} \sum_{n=1}^N 
\E  f(x_{n-1}) - f(x )  \\
&\leqslant  &   
2 D^2 \gamma N  +  \frac{2}{\gamma N}  \| x_{0}-x_\ast \|^2 + 2 C  \\
&\leqslant  &   
2 \bigg( 2d    
    \frac{\beta}{N^{(\beta-1)/(2\beta)} } (    M_\beta / M_2 )^{(\beta-1)/(\beta+1)}M_\beta \bigg)^2   \frac{1}{24 d M_2^2 \beta^2  N^{(\beta+1)/(2\beta)}} N  + \frac{48 d   \beta^2 }{N^{(\beta-1)/(2\beta)} } ( M_2 \| x_{0}-x_\ast \| )^2  
 \\
 & & +  \frac{2d   }{N^{(\beta-1)/(2\beta)} 24   \beta }
  (M_\beta/  M_2)^{2\beta/(\beta+1)}
  \big[  3    \sigma^2    
 + 8     
  \frac{\beta^3}{N^{2/\beta} } (M_\beta/M_2)^{-4\beta/(\beta+1)}
 \big]  \\
 & \leqslant & 
 \frac{d  }{N^{(\beta-1)/(2\beta)}}
 \bigg( 48   \beta^2  ( M_2 \| x_{0}-x_\ast \| )^2  
 + 6 \sigma^2   (M_\beta/  M_2)^{2\beta/(\beta+1)}
 + \frac{1}{3}  (    M_\beta / M_2 )^{4 \beta /(\beta+1)}
 \bigg)\\
 & & \frac{16 d   }{N^{(\beta-1)/(2\beta)} 24     }
  \frac{\beta^2}{N^{2/\beta} } (M_\beta/M_2)^{-2\beta/(\beta+1)}
\\
 & \leqslant & 
 \frac{d  }{N^{(\beta-1)/(2\beta)}}
 \bigg( 7   \beta M_2 \| x_{0}-x_\ast \|  
 + 3 \sigma +    (M_\beta/  M_2)^{2\beta/(\beta+1)}
 +   \frac{\beta}{N^{1/\beta} } (M_\beta/M_2)^{-\beta/(\beta+1)}
 \bigg)^2,
 \EEAS
 which is almost the desired bound, except the dependence on $d$, which is in $d$ instead of $d^{(\beta-1)/\beta}$.
 Like in the proof for constrained optimization, we can choose $\gamma$ and $\delta$ with slightly different scalings in $d$, that is, 
 $\displaystyle \gamma = \frac{1}{24 d^{(\beta-1)/\beta} M_2^2 \beta^2  N^{(\beta+1)/(2\beta)}}$, and $ \displaystyle \delta = 
  \frac{\beta d^{1/\beta}}{N^{1/(2\beta)} } (M_\beta^\beta M_2)^{-1/(\beta+1)}
  $. The value of $\gamma$ does not satisfy our constraint when $d^{ -1/\beta}    N^{(\beta+1)/(2\beta)}$ is less than one, which happens only when the final bound is trivial. Thus, we can safely consider the step-size $\gamma$ above.

\paragraph{Proof for anytime algorithm}
By setting $\displaystyle \gamma_n = \frac{1}{24 d M_2^2 \beta^2  n^{(\beta+1)/(2\beta)}}$, and $ \displaystyle \delta_n = 
  \frac{\beta}{n^{1/(2\beta)} } (M_\beta^\beta M_2)^{-1/(\beta+1)}
  $, as a function of $n$, we obtain an anytime algorithm. In order to analyze it, we can simply recycle the proof techniques
  of \citet{gradsto} (in particular Abel's summation formula). All sums of the forms  $\sum_{n=1}^N n^{-\delta}$ may then be bounded thrtough $ \frac{N^{1-\delta}}{1-\delta}$ for $\delta \in (0,1)$ and less than $\frac{1}{\delta-1}$ for $\delta>1$, with $\sum_{n=1}^N \frac{1}{n} \leq \log (N+1)$.
  The term $\gamma_n^2 \delta_n^{-2}$   leads to an extra factor of 
  $\log(N+1)$ while all other factors only lead to extra \emph{constant} factors which are less than $4$. The final bound is thus the same as before up to logarithmic terms

 \subsection{Proof of Proposition~\ref{prop:twoppoint-optim-strongly}}
\label{app:twoppoint-optimstrongly}
\label{app:twoppoint-optim-strongly}
The proof technique is the same as for  Proposition~\ref{prop:twoppoint-optim}
in Appendix~\ref{app:twoppoint-optim}.
The first line that differs is the following, where $\mu$-strong convexity is used:
 \BEAS
& & \E \big[ \| x_n - x_\ast\|^2  | \F_{n-1} \big] \\
& \leqslant & 
( 1 - \mu \gamma_n )\| x_{n-1} -  x_\ast \|^2 - 2 \gamma_n \big[  f(x_{n-1}) - f( x_\ast ) \big] 
 +   \gamma_n  d \delta_n^{\beta-1}  \frac{M^{\beta}_\beta}{\beta !}  2\beta^2
\|  x_{n-1}-x \|
\\
& & 
+ 2 \gamma_n^2 \bigg[  \frac{3 \beta^3 d^2 \sigma^2 }{4 \delta_n^2}        + 
   4   {d^2 \beta^2   }     M_2^4 \delta_n^2   + 12 d M_2^2 \beta^2 \big[ f(x_{n-1}) - f(x_\ast) \big]
   \bigg]  + 
 6 \gamma_n^2 \big( \frac{d}{2 \delta_n} \big)^2 \sigma^2  
\beta^3.
\EEAS
If we assume that  $\gamma_n$ is less than $\frac{1}{24 d M_2^2 \beta^2}$, then
we get
\BEA
\label{eq:EEE}
\E  f(x_{n-1}) - f(x_\ast ) 
 & \leqslant & ( \frac{1}{\gamma_n} - \mu) \E \| x_{n-1}-x_\ast \|^2 - \frac{1}{\gamma_n} \E \| x_{n}-x_\ast \|^2
  \\
\nonumber  & & +2 \gamma_n \bigg[  \frac{3 \beta^3 d^2 \sigma^2 }{2 \delta_n^2}        + 
   4   {d^2 \beta^2   }     M_2^4 \delta_n^2   
   \bigg]  + 
 d\delta_n^{\beta-1}  \frac{M^{\beta}_\beta}{\beta !}  2\beta^2
  \sqrt{ \E \| x_{n-1}-x_\ast \|^2 }.
\EEA
In order to bound $\sqrt{ \E \| x_{n-1}-x_\ast \|^2 }$, we use the same proof technique than in Appendix~\ref{app:twoppoint-optim}, without using strong convexity and from the equation:
\BEAS
\E \| x_{n}-x_\ast \|^2
 & \leqslant &  \E \| x_{n-1}-x_\ast \|^2
  \\
  & & +2 \gamma_n^2 \bigg[  \frac{3 \beta^3 d^2 \sigma^2 }{2 \delta_n^2}        + 
   4   {d^2 \beta^2   }     M_2^4 \delta_n^2   
   \bigg]  
+
 \gamma_n d\delta_n^{\beta-1}  \frac{M^{\beta}_\beta}{\beta !}  2\beta^2
  \sqrt{ \E \| x_{n-1}-x_\ast \|^2 },
\EEAS
which leads to
\BEAS
\E \| x_{n}-x_\ast \|^2
 & \leqslant &  \E \| x_{0}-x_\ast \|^2
  \\
  & & +2 \sum_{k=1}^n \gamma_k^2 \bigg[  \frac{3 \beta^3 d^2 \sigma^2 }{4 \delta_k^2}        + 
   4   {d^2 \beta^2   }     M_2^4 \delta_k^2   
   \bigg]+ \sum_{k=1}^n
 \gamma_k \delta_k^{\beta-1}  d \frac{M^{\beta}_\beta}{\beta !}  2\beta^2
  \sqrt{ \E \| x_{k-1}-x_\ast \|^2 }\\
   & \leqslant &  \E \| x_{0}-x_\ast \|^2
  \\
  & & + B+ \sum_{k=1}^n
 \gamma_k \delta_k^{\beta-1}  d \frac{M^{\beta}_\beta}{\beta !}  2\beta^2
  \sqrt{ \E \| x_{k-1}-x_\ast \|^2 },
\EEAS
with $B = 2 \sum_{k=1}^n \gamma_k^2 \bigg[  \frac{3 \beta^3 d^2 \sigma^2 }{4 \delta_k^2}        + 
   4   {d^2 \beta^2   }     M_2^4 \delta_k^2   
   \bigg]$.
   
Thus, with $u_n =  \sqrt{ \E \| x_{n}-x_\ast \|^2 }$, we have:
$$
u_n ^2 \leqslant u_0^2 + 
B
 + \sum_{k=1}^n
 \gamma_k \delta_k^{\beta-1}  d \frac{M^{\beta}_\beta}{\beta !}  2\beta^2
u_k
$$
From Lemma 1 of~\citet{schmidt2011convergence}, we get:
\BEAS
u_n  & \leqslant &  \sum_{k=1}^n
 \gamma_k \delta_k^{\beta-1}  d \frac{M^{\beta}_\beta}{\beta !}  2\beta^2 + u_0 + B^{1/2}.
\EEAS

We now choose $\gamma_n =\frac{1}{ n \mu}$, which is less than
$\frac{1}{24 d M_2^2 \beta^2}$ only for certain values of $n$ (if this is not satisfied, the bound is trivial anyway, so this restriction does not impact the result). We select $\delta_n = \bigg( \frac{d^2 \beta!}{ M_\beta^\beta \mu n} \bigg)^{1/(\beta+1)}$.

Then, we may follow the previous proof and sum \eq{EEE}, with telescoping elements and the same formulas (except the leading terms in $n\mu$, leading to the following bound:
 \BEAS
 & & \frac{1}{N} \sum_{n=1}^N 
\E  f(x_{n-1}) - f(x )  \\
 & \leqslant & 
    \big( \frac{d^2 M_\beta^2}{n \mu } \big) ^{(\beta-1) / ( \beta+1 )}
 \bigg( 8   \beta M_\beta \| x_{0}-x_\ast \|  
 + 4 \sigma +   2  +    {\beta }  (M_2/M_\beta)^{2}  \big( \frac{M_\beta^2}{n \mu } \big) ^{2 / ( \beta+1 )}
 \bigg)^2.
 \EEAS

\section{Proof of  Propositions in Constrained Optimization}
\subsection{Proof of Proposition~\ref{prop:onepoint-nonstrong}}
\label{app:onepoint-nonstrong}

We recall that the gradient estimate is $g_n = \frac{d}{\delta_n}\Big(f(x_{n-1}+\delta_nr_nu_n)+\varepsilon_n\Big)k(r_n)u_n$, so that
$$ \E g_n = \hat{f}_{\delta_n}'(x_{n-1})= f'_n(x_{n-1}) + \zeta_n,\  \text{ with } \ \|\zeta_n\| \leq 2\sqrt{2}\frac{M_\beta^\beta\beta}{(\beta-1)!}\delta_n^{\beta-1}\, .$$
and the variance of $g_n$ is bounded as
$$ \E \|g_n\|^2 \leq 6\beta^3\frac{d^2}{\delta^2_n}(C_{\delta_n}^2+\sigma^2) \leq 6\beta^3\frac{d^2}{\delta^2_n}(C_{\delta_1}^2+\sigma^2) $$
 Using Proposition \ref{PR:SGD}, along with the specific choices of $$\gamma_n = \frac{R \delta_n }{\sqrt{ \beta^3}d \sqrt{n}} \ \text{  and  } \  
\displaystyle \delta_n^{\beta } = 
\frac{    d   \sqrt{\beta}(\beta-1) ! }{
\sqrt{n} M^\beta_\beta },$$   lead to
\BEAS\frac{1}{N}\sum_{n=1}^N 
\E \big[  f_n(x_{n-1}) - f_n(x) \big]
&\leq& \frac{R^2}{2\gamma_NN}+  3\beta^3d^2 \frac{1}{N} \sum_{n=1}^N\frac{\gamma_n}{\delta_n^2}  ( C_{\delta_n}^2 + \sigma^2)  
+ 2\sqrt{2}\frac{M^\beta_\beta}{(\beta-1) !}\beta R\frac{1}{N}\sum_{n=1}^N \delta_n^{\beta-1}\\
&\leq& \frac{R^2}{2\gamma_NN}+\frac{4RM_\beta}{N}\sum_{n=1}^N\bigg( \frac{  d \sqrt{\beta} }{ \sqrt{n} }  \bigg)^{\frac{\beta-1}{\beta}} (3C_{\delta_n}^2 + 3\sigma^2+ 2\sqrt{2})\\
&\leq& 25R M_\beta \bigg(\frac{ d^2 \beta}{ N}\bigg)^{\frac{\beta-1}{2\beta}}(C^2_{\delta_1}+\sigma^2+1)\, .
\EEAS

 \subsection{Proof of Proposition~\ref{prop:oneppoint-online-strongly}}
\label{app:oneppoint-onlinestrongly}

Using the same bounds on the biais and variance of $g_n$ than in the proof of Proposition~\ref{prop:onepoint-nonstrong} along with the results of Proposition \ref{PR:SGDstrong} give  

 \BEAS
\frac{1}{N}\sum_{n=1}^N 
\E \big[ f_n(x_{n-1}) - f_n(x) \big] + \frac{\mu}{2}\E\| x_{N}-x\|^2
&\leq&   3 \beta^3   \frac{d^2}{\mu}   \frac{1}{N}\sum_{n=1}^N \frac{1}{n\delta_n^2}  ( C_{\delta_n}^2 + \sigma^2)  \\
& &+\frac{2\sqrt{2}\beta M_\beta^\beta}{(\beta-1)!}R\frac{1}{N}\sum_{n=1}^N\delta_n^{\beta-1}
\EEAS

 The specific choice of $\delta_n^{\beta+1}=\frac{\beta  d^2\beta!}{n\mu M^\beta_\beta  } $ ensures that the upper bound is smaller than
$$ 15\beta^2M_\beta^{\frac{2\beta}{\beta+1}}\bigg(\frac{d^2}{\mu N}\bigg)^{\frac{\beta-1}{\beta+1}}(C_{\delta_1}+\sigma^2+1)\, .
$$

  \subsection{Proof of Proposition~\ref{PR:beta2}}
 \label{app:8}
Proposition 7 is another consequence of Propositions \ref{PR:SGD} and \ref{PR:SGDstrong}. Indeed, for $\beta=2$, the mapping $\hat{f}_\delta$ is convex, hence we can consider the algorithms as stochastic gradient descents on  $\hat{f}_\delta$, with an unbiased estimate of the gradient. Then it suffices to approximate $\hat{f}_\delta(x)$ by $f(x) \pm 2\sqrt{2}\beta\frac{M_\beta^\beta}{\beta !}\delta^\beta$, to choose parameters so that error terms balance and  to conclude.

 \subsection{Proof of Proposition~\ref{prop:oneppoint-optim-strongly}}

\label{app:oneppoint-optimstrongly}

Once again, the proof uses the same standard arguments than the  proof of Proposition \ref{PR:SGDstrong}. More precisely, we consider here constant step size $\delta_n=\delta$, where $\delta$ is small enough so that $\hat{f}_\delta$ is $\mu'$-strongly convex (where $\mu' \leq \mu$ and, as we will see, it will be implied by $N$ being big enough) and we apply Proposition \ref{PR:SGDstrong} to $\hat{f}_\delta$, this allows us to bound $\E\| x_{N}-x^\sharp\|^2$, where $x^\sharp$ is a minimizer of $\hat{f}_\delta$  

Finally, we conclude using  the smoothness and the strong convexity of $f$ that imply that 
 $$
\|x^\sharp-x^\star \| \leq \frac{1}{\mu'} \|f'(x^\sharp)\| \leq \frac{1}{\mu'} \|f'(x^\sharp)-f_\delta'(x^\sharp)\|\leq \frac{1}{\mu'} \frac{M^{\beta}_{\beta}}{(\beta-1)!}\delta^{\beta-1}2\sqrt{2}\beta.$$
As a consequence the triangle inequality
\BEAS
\E\| x_{N}-x^\star\|^2 &\leq & 2\E\| x_{N}-x^\sharp\|^2 +2\|x^\sharp-x^\star \| ^2
\EEAS
and the combined above majorations of  $\E\| x_{N}-x^\sharp\|^2$ and $2\|x^\sharp-x^\star \| ^2$ give the result.

We emphasize agains that the fact that $f_\delta$ is $\mu'$-strongly convex is  ensured by $N$ being large enough (and the larger $N$, the bigger $\mu' \leq \mu$ can be chosen).

 \end{document}